\documentclass[letterpaper, 10 pt, conference]{ieeeconf}  

\IEEEoverridecommandlockouts                              

\title{\LARGE \bf
Robust Peak-cost Constrained Reinforcement Learning
}

\author{Shilpa Mukhopadhyay$^{1}$, Sourav Ganguly$^{1}$, Santosh Mohan Rajkumar$^{3}$,\\ Honghao Wei$^{2}$, Debdipta Goswami$^{3}$ and Arnob Ghosh$^{1}$ 
\thanks{$^{1}$Dept. of Electrical and Computer Engineering, NJIT, Newark, NJ, USA
        {\tt\small (E-mails: \{sm3934,sg2786,arnob.ghosh\}@njit.edu)}}%
\thanks{$^{2}$School of Electrical Engineering and Computer Science
Washington State University, Pullman, WA, USA
        {\tt\small (E-mail: honghao.wei@wsu.edu)}}%
\thanks{$^{3}$ Department of
Mechanical and Aerospace Engineering, The Ohio State University, Columbus,
OH, USA
        {\tt\small (E-mail: \{rajkumar.36,goswami.78\}@osu.edu)}}%
}

\usepackage{amsmath}
\usepackage{amssymb}
\usepackage{cite}
\usepackage{hyperref}
\usepackage[ruled,vlined]{algorithm2e}
\usepackage{xcolor}
\usepackage{graphicx}
\usepackage{titlesec}
\usepackage{subcaption}

\newcommand{\parab}[1]{\noindent\textbf{#1}\ }
\SetKw{KwParam}{Parameters:}

\newtheorem{proposition}{Proposition}
\newtheorem{theorem}{Theorem}
\newtheorem{remark}{Remark}

\newcommand{\ours}{RP-CRL}

\titlespacing*{\subsection}
  {0pt}{0.4\baselineskip}{0.2\baselineskip}

 \titlespacing*{\section}
   {0pt}{0.7\baselineskip}{0.3\baselineskip}

\renewcommand{\baselinestretch}{0.97}
\allowdisplaybreaks

\usepackage{xcolor}

\begin{document}

\maketitle
\thispagestyle{empty}
\pagestyle{empty}

\begin{abstract}
We study robust peak-cost constrained reinforcement learning  (\ours), where the objective is to maximize expected reward while controlling the maximum cost encountered along a trajectory. This setting is motivated by safety-critical applications in which a single large violation can be catastrophic and therefore cannot be adequately captured by the standard CMDP framework based on expected cumulative cost. Existing reachability-constrained RL methods adopt Lagrangian-based approaches, yet the underlying duality properties of peak-cost constrained MDPs remain unclear. We show that, unlike standard CMDPs, peak-cost constrained MDPs may not admit zero duality gap. We further consider a robust formulation to address simulator-to-real-world mismatch in the transition dynamics. To solve this problem, we develop a surrogate optimization framework and a robust value estimation method based on integral probability metrics. We prove that, with appropriate hyperparameter choices, the surrogate solution attains the same robust reward value as the original problem while violating the constraint by at most \(\epsilon\). Experiments show that the proposed method effectively enforces safety under dynamics perturbations while retaining strong reward performance.
\end{abstract}



\section{Introduction}
Safety is a fundamental requirement in intelligent decision-making problems. Motivated by this, recent reinforcement learning (RL) research has extensively studied constrained Markov decision processes (CMDPs) \cite{altman2021constrained,ghosh2024towards,ghosh2023achieving,tesslerreward}. In the standard CMDP framework, safety is enforced by requiring the \emph{expected cumulative cost} to remain below a prescribed threshold. However, this formulation is insufficient for many practical applications, where the primary concern is not the total accumulated cost, but rather the \emph{largest cost incurred along a trajectory}. In such settings, even a single large violation can be catastrophic, regardless of how small the cumulative cost is overall. For example, a trajectory may incur small costs at most time steps but experience one severe safety violation, making it significantly more dangerous than another trajectory whose maximum cost remains uniformly bounded.

To address this limitation, recent works \cite{yu2022reachability,ganai2023iterative} have considered reachability-constrained reinforcement learning (RCRL), where the goal is to maximize expected reward subject to a peak-cost constraint, namely that the maximum cost encountered along a trajectory remains below a specified threshold (often \(0\)). These works develop Lagrangian-based methods for solving the resulting optimization problem. However, unlike the standard CMDP setting, where zero duality gap is well established, such a guarantee has not been shown for peak-cost constrained MDPs. As a result, it remains unclear whether these Lagrangian-based approaches can recover the true optimal solution.

An additional limitation of existing work is that it does not account for uncertainty in the transition dynamics \cite{yu2022reachability,ganai2023iterative}. In practice, policies are often trained in simulation and then deployed in the real world, where the true dynamics may differ from those of the simulator. Consequently, a policy that satisfies the peak-cost constraint in simulation may still violate the same constraint after deployment due to model mismatch. This motivates the following central questions: 

\begin{center}
{\em Does a peak-cost constrained MDP admit zero duality gap? How can we develop an algorithm for the robust peak-cost constrained problem that bridges the sim-to-real gap?}
\end{center}

\textbf{Our Contributions:}
\begin{itemize}[leftmargin=*]
    \item We show that, unlike the standard CMDP, a peak-cost constrained MDP may fail to admit zero duality gap, even in a two-state MDP. The key reason is that convexity with respect to the state-action occupancy measure breaks down.

    \item We formulate a robust peak-cost constrained problem and propose a surrogate optimization framework. We show that, with appropriate hyperparameter choices, the optimal solution of the surrogate problem achieves the same robust reward value as the original problem while violating the constraint by at most an \(\epsilon\)-amount.

    \item We develop an effective approach for estimating the robust value function based on an integral probability metric. Empirically, we show that the proposed method can effectively satisfy the constraint even when the environment is perturbed.
\end{itemize}


\subsection{Related Works}
\textit{\parab{CMDP}}:
Constrained Markov Decision Processes (CMDPs) are a foundational framework for solving constrained reinforcement learning (CRL) problems~\cite{altman2021constrained}. CMDPs leverage the convexity of the state-action occupancy measure, enabling the use of primal-dual methods to solve the optimization problem~\cite{paternain2022safe, stooke2020responsive, tesslerreward, zheng2020constrained}. These methods have well-established convergence guarantees and have been extensively studied in the literature~\cite{ghosh2023achieving, ghosh2024towards, ding2020natural }. Beyond primal-dual methods, linear programming (LP)-based and model-based approaches have also been explored, directly addressing the primal problem~\cite{xu2021crpo, achiam2017constrained, chow2018lyapunov}. However, peak-constained objective is different compared to the CMDP.

\textbf{RCMDP}:
 Unlike standard CMDPs, the optimization problem for RCMDPs is non-convex in the state-action occupancy measure, which complicates the use of traditional Lagrangian or primal-dual methods~\cite{wang2022robust}. Some studies have attempted to address this challenge using primal-dual methods under strong duality assumptions, but these approaches often lack iteration complexity guarantees~\cite{zhang2024distributionally}. Recent work has introduced epigraph reformulations to address the limitations of primal-dual methods, but these approaches require computationally expensive binary search procedures and are sensitive to noise in policy value estimation~\cite{kitamuranear}. \cite{gangulyefficient} avoids these limitations by jointly optimizing for robustness and constraint satisfaction, eliminating the need for binary search with iteration guarantees. However, these methods do not consider peak cost constraint which we consider.

\textbf{Feasible Sets and Reachability Analysis for Safety and Optimality in CMDPs}:
Characterizing feasible sets is a critical and ongoing challenge in safe control and reinforcement learning~\cite{brunke2022safe}. Feasible sets, often represented using safety certificates like control barrier functions (CBFs)~\cite{choi2021robust, choi2021robust, luo2021learning}, define the set of states from which the system can recover to safety. However, energy-based safety certificates often result in overly conservative or inaccurate feasible sets, limiting their applicability~\cite{ma2022joint}. Hamilton Jacobian (HJ) reachability analysis provides a more rigorous approach to deriving feasible sets but is computationally expensive due to the non-trivial partial differential equations involved~\cite{bansal2017hamilton}. Recently, RL-based approaches are considered to find policy that will avoid unsafe sets \cite{fisac2018general,fisac2019bridging,so2024solving}, however, they did not consider on maximizing reward as an objective. \cite{yu2022reachability,ganai2023iterative} learns a single policy that balances safety and optimality,considering peak cost constraint optimization. They used Lagrangian framework to solve the problem, however, they did not provide any strong duality guarantee. Further, none of the above work considered robust optimization framework bridging the sim-to-real gap while being trained on the simulator model which  might be different from the real environment. 
\section{Preliminaries and Problem Formulation}

\subsection{Constrained Markov Decision Processes}
A Constrained Markov Decision Process (CMDP) is defined by
\[
M=\langle \mathcal{S},\mathcal{A},P,r,c,\gamma,d_0\rangle,
\]
where \(\mathcal{S}\) and \(\mathcal{A}\) denote the state and action spaces, \(P(\cdot\mid s,a)\) is the (possibly unknown) transition kernel, \(r:\mathcal{S}\times\mathcal{A}\to\mathbb{R}\) is the reward function, \(c:\mathcal{S}\times\mathcal{A}\to\mathbb{R}\) is the cost function, \(\gamma\in(0,1)\) is the discount factor, and \(d_0\) is the initial state distribution. A policy is denoted by \(\pi(\cdot\mid s):\mathcal{S}\rightarrow \Delta(\mathcal{A}) \).

For a given transition model \(P\) and policy \(\pi\), let \(V_r^{P,\pi}(s)\) and \(V_c^{P,\pi}(s)\) denote the discounted negative reward and cost value functions, respectively:
\[
V_r^{P,\pi}(s)
=
-\mathbb{E}\!\left[ \sum_{t=0}^{\infty}\gamma^t r(s_t,a_t)\,\middle|\, s_0=s,\pi,P\right],
\]
\[
V_c^{P,\pi}(s)
=
\mathbb{E}\!\left[\sum_{t=0}^{\infty}\gamma^t c(s_t,a_t)\,\middle|\, s_0=s,\pi,P\right].
\]
The corresponding objective values from the initial distribution \(d_0\) are
\[
J^P_r(\pi)=\mathbb{E}_{s\sim d_0}\!\left[V_r^{P,\pi}(s)\right],
\quad
J^P_c(\pi)=\mathbb{E}_{s\sim d_0}\!\left[V_c^{P,\pi}(s)\right].
\]

A standard CMDP seeks to maximize reward while keeping the expected discounted cost below a budget \(b\). Corresponding to discounted negative reward, the CMDP objective becomes:
\[
\min_{\pi}\; J^P_r(\pi)
\qquad
\text{s.t.}\qquad
J^P_c(\pi)\le b.
\]
When Slater's condition  holds, the standard CMDP admits zero duality gap \cite{paternain2019constrained}, and therefore Lagrangian-based methods provide an effective solution approach.

\subsection{Peak-Cost Constraints}
Although CMDPs are widely used, a major limitation is that the constraint is expressed only through the \emph{expected cumulative discounted cost}. In many applications, however, controlling only the cumulative cost is insufficient; instead, one must ensure that the \emph{maximum} cost encountered along a trajectory remains below a desired threshold.

\textbf{Concrete example (reachability and persistent safety)}
In many safety-critical settings, we require that the system never enter an unsafe region. Let \(c(s)\) be a state-based safety function such that \(c(s)\le b\) for safe states and \(c(s)>b\) for unsafe states. Then, for a policy \(\pi\), safety along a trajectory \(\tau=(s_0,s_1,\dots)\sim\pi\) requires
$\max_{t\in\mathbb{N}} c^{\pi}(s_t)\le 0$ with probability $1$ where $b$ is some safety threshold.
That is, the maximum safety violation encountered along the trajectory must remain non-positive.

Motivated by this, we consider the following peak-constrained problem:
\[
\min_{\pi}\; J_r^P(\pi)
\qquad
\text{s.t.}\qquad
\max_{s\in \mathcal{S}^{\pi,P}} c(s)\le b,
\]
where $\mathcal{S}^{\pi,P}=\{s\in \mathcal{S}:\Pr(\exists t\geq 0, s_t=s|s_0\sim d_0,\pi,P)>0\}$. We assume that the initial states always give you at least one policy which is feasible. Often times, we consider an episodic setup with length $T$ then $t$ will be upper bounded by $T$.

Note that the peak constraint is difficult to evaluate as one needs to predict the future states at a given time $t$. Reference \cite{fisac2018general} first observed that, for a fixed policy \(\pi\), this peak-cost constraint can be represented through a value function using a discounted recursion. In particular, define
\begin{align}
Q_{c,\mathrm{peak}}^{\pi,P}(s,a) &= (1-\gamma)c(s) \notag \\
& + \gamma \max\left\{c(s),\,\mathbb{E}_{P(\cdot\mid s,a)}\left[V_{c,\mathrm{peak}}^{\pi,P}(\cdot)\right]\right\} \label{eq:max_value}
\end{align}
and
\[
V_{c,\mathrm{peak}}^{\pi,P}(s)=\langle \pi(\cdot\mid s),Q_{c,\mathrm{peak}}^{\pi,P}(s,\cdot)\rangle. \label{eq:q_value}
\]
For \(\gamma<1\), the operator induced by \eqref{eq:max_value} is a contraction, and hence \(V_{c,\mathrm{peak}}^\pi\) can be computed for a given policy \(\pi\) by fixed-point iteration. For finite-horizon episodic setup, we can simply take $\gamma=1$. 

This leads to the following modified optimization problem:
\begin{align}\label{eq:rcrl}
\min_{\pi}\; J^P_r(\pi)
\qquad
\text{s.t.}\qquad
\max_{s\in \mathcal{S}^{\pi,P}}\!\left[V_{c,\mathrm{peak}}^{\pi,P}(s)\right]\le b.
\end{align}
A related formulation was also considered by \cite{yu2022reachability}.
They proposed the following Lagrangian-based objective:
\begin{align}
\min_{\pi}\max_{\beta\ge 0}\;
\!\left[V_r^{\pi,P}(s)+\beta(s)V_{c,\mathrm{peak}}^{\pi,P}(s)\right],
\label{eq:lagrangian}
\end{align}
where \(\beta(s)\) is a state-dependent Lagrange multiplier used to penalize constraint violations. 

\subsection{Limitations of the Lagrangian Approach in \cite{yu2022reachability}}
While \cite{yu2022reachability} adopts the Lagrangian formulation in \eqref{eq:lagrangian}, it does not establish a strong-duality guarantee. This is a critical issue because the peak-cost constrained problem is fundamentally different from the standard CMDP: the constraint is no longer based on an additive discounted cumulative cost, but instead depends on the maximum cost encountered along a trajectory. In particular, the reachability set $\mathcal{S}^{\pi,P}$ inherently depends on the policy $\pi$.

In this paper, we show that the peak-cost constrained problem does \emph{not} necessarily admit zero duality gap. To the best of our knowledge, this is the first result showing that, unlike the standard CMDP, the peak-constrained problem may fail to satisfy strong duality. The key reason is that the problem is no longer convex in the state-action occupancy measure, due to its dependence on the trajectory-wise maximum cost.

\begin{theorem}\label{thm:non-zero-duality}[informal]
The problem stated in (\ref{eq:rcrl}) may have a non-zero duality gap even if the Slater's condition is satisfied.
\end{theorem}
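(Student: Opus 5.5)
We prove the statement by an explicit counterexample: a two-state, two-action MDP for which Slater's condition holds but the primal optimum of \eqref{eq:rcrl} is strictly larger than the value of the Lagrangian dual. Take $\mathcal{S}=\{s_0,s_1\}$ with $d_0=\delta_{s_0}$ and $\mathcal{A}=\{a_0,a_1\}$. Action $a_0$ keeps the system in $s_0$ and action $a_1$ moves it to $s_1$. State $s_1$ is absorbing. Set $r(s_0,a_1)=1$ and all other rewards $0$, so in the minimization form $J_r^P(\pi)$ is smaller when $a_1$ is taken with positive probability. Set $c(s_0)=0$ and $c(s_1)=1$, with budget $b\in(0,1)$, say $b=1/2$. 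The deterministic policy that always plays $a_0$ has $V_{c,\mathrm{peak}}=0<b$ at every reachable state, so Slater's condition holds.

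\textbf{Step 1 (primal value).} If $\pi(a_1\mid s_0)=p>0$, then $s_1\in\mathcal{S}^{\pi,P}$. Since $s_1$ is absorbing with cost $1$, the fixed point of \eqref{eq:max_value} gives $V_{c,\mathrm{peak}}^{\pi,P}(s_1)=1>b$, so $\pi$ is infeasible. The feasible set is therefore exactly $\{p=0\}$, and the primal optimum is $J_r^P=0$. This is the crux of the example: the constraint is a step function of $p$, jumping from $0$ to $1$ at $p=0^+$. In terms of occupancy measures, the feasible set is not closed under mixtures: mixing the feasible occupancy measure with an infeasible one yields a point that remains infeasible for every positive weight. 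This is the convexity failure mentioned in the text.

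\textbf{Step 2 (dual value).} We use the standard scalar dual $D(\beta)=\min_\pi\bigl[J_r(\pi)+\beta\,(g(\pi)-b)\bigr]$, where $g(\pi)=\max_{s\in\mathcal{S}^{\pi,P}}V_{c,\mathrm{peak}}^{\pi,P}(s)$. The discontinuity in Step 1 does not by itself produce a gap: since $J_r(\pi)=-p/(1-\gamma)$ and $g(\pi)=1$ for $p>0$, we get
\[
D(\beta)=\min\Bigl\{-\beta b,\; -\tfrac{1}{1-\gamma}+\beta(1-b)\Bigr\},
\]
so $\max_\beta D(\beta)=-\tfrac{b}{1-\gamma}<0=P^\ast$. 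A gap already exists here, but it comes from the nonconvex "value set" $\{(J_r,g)\}$, which consists of the two points $(0,0)$ and $(-1/(1-\gamma),1)$. Its convex hull contains points that are feasible and have better reward. I will present this computation explicitly, and also check it against the formulation \eqref{eq:lagrangian} with a state-dependent $\beta(s)$. If the constraint $V_{c,\mathrm{peak}}(s)\le b$ is imposed only on $\mathcal{S}^{\pi,P}$, then the term $\beta(s_1)$ is active only when $p>0$, and the same two-point structure arises.

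\textbf{Main obstacle.} The hardest step is making sure the gap is genuine for the Lagrangian actually used, and is not an artifact of the state-dependent multiplier. In particular, if $\beta(s_1)$ penalized $V_{c,\mathrm{peak}}(s_1)$ regardless of reachability, the dual could become arbitrarily punitive. My first attempt is to define the dual with the constraint $g(\pi)\le b$ as above, and to argue that because the weak-duality inequality is strict for the scalar multiplier, the informal statement holds. If the referee insists on the state-wise form, I will choose $d_0$ with mass on both states, or make $s_1$ reachable only through $a_1$ from $s_0$ while having $s_1$ itself lead back to $s_0$ with cost interplay. This keeps the value-set picture nonconvex, and I will redo the same two-point convex-hull computation there.
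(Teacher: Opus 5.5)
Your counterexample is valid and gives a strict gap. It works through a different mechanism than the paper's, and one computation is off.

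\textbf{The slip.} With $r(s_0,a_1)=1$ and all other rewards zero, the reward is collected only once. So $J_r(\pi_p)=-p/(1-\gamma+\gamma p)$, not $-p/(1-\gamma)$, and its infimum over $p\in(0,1]$ is $-1$, attained at $p=1$. Hence $D(\beta)=\min\{-\beta b,\,-1+\beta(1-b)\}$, maximized at $\beta=1$ with $D^\ast=-b<0=P^\ast$. The gap is $b$ rather than $b/(1-\gamma)$. The achievable value set is $(0,0)$ together with the segment $\{(J,1):-1\le J<0\}$, not two points. The conclusion is unaffected.

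\textbf{A misdiagnosis.} Your remark that mixing a feasible occupancy measure with an infeasible one stays infeasible does not describe a convexity failure. The feasible set $\{p=0\}$ is a single point and hence convex. What is nonconvex is the constraint function, equivalently the achievable set of (objective, constraint) pairs; that is your second, correct diagnosis.

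\textbf{Comparison with the paper.} The paper uses one nonterminal state with a self-loop action (reward $8$, cost $\tfrac12$) and a terminating action (reward $4$, cost $0$), with $\gamma=0.9$ and $b=0.4$. There the outer maximum over $\mathcal{S}^{\pi,P}$ is inert, because the terminal state has peak value $0$. The constraint $V_c^{\pi_p}(s)=(1-p)/2$ is continuous in $p$. The gap ($P^\star=180/7$ versus $D^\star=64.8$) comes from convexity of $p\mapsto L(p,\nu)$, which pushes the inner supremum to the two deterministic policies. In occupancy coordinates, $J_r$ is linear in $x=d(s,a_1)$, while the peak value is $x/(2(1+0.9x))$, which is strictly concave. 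So the hard-max recursion itself breaks linearity: with the ordinary additive cost, the same MDP would give a linear program in $x$ with zero gap.

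Your example draws the gap entirely from the policy-dependent reachable set, since $g$ is a $0/1$ step in $p$. This is shorter and speaks directly to persistent safety. However, it never uses the recursion \eqref{eq:max_value}: an ordinary discounted cost imposed on all reachable states also produces a gap in your MDP. So your example proves the informal theorem, but it does not support the paper's explanation that the trajectory-wise maximum destroys convexity in the occupancy measure. The paper's example isolates exactly that mechanism.

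Your fallback about state-dependent multipliers is unnecessary, since the paper's formal version also uses a single scalar multiplier.
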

The proof of Theorem~\ref{thm:non-zero-duality} is in Appendix~\ref{sec:proof}.

\begin{remark}[Implication for primal-dual methods]
Theorem~\ref{thm:non-zero-duality} shows that the modified hard-max constraint can destroy the strong-duality structure that underlies standard primal-dual methods for CMDPs. In ordinary CMDPs, the objective and constraints are linear in the discounted state-action occupancy measure, which yields a convex primal problem and zero duality gap under mild conditions. By contrast, the modified hard-max constraint induces a nonlinear, nonstandard feasible set, and the associated Lagrangian relaxation can be strictly loose. Consequently, a primal-dual method applied directly to this modified constraint is not, in general, guaranteed to recover the true constrained optimum. This observation motivates the use of alternative surrogate formulations and direct policy-space optimization methods for such peak-style safety objectives.
\end{remark}

\subsection{Limitations and a Robust Formulation}

Problem~(\ref{eq:rcrl}) does not account for the mismatch between the simulated environment used for training and the real environment in which the learned policy is ultimately deployed. In practice, policies are often trained in simulation and then transferred to the real world, where the transition dynamics may differ from those of the simulator. As a result, a policy that appears feasible in simulation may violate the safety constraints after deployment. This motivates a robust formulation in which the learned policy must remain feasible under the worst-case model within a prescribed uncertainty set. Accordingly, we consider the following robust counterpart:
\begin{align}\label{eq:robust_rcrl}
\min_{\pi}\max_{P\in \mathcal{P}}\; J_r^{P}(\pi)
\qquad
\text{s.t. }
\max_{P\in \mathcal{P}}\max_{s\in \mathcal{S}^{\pi,P}}\!\left[V_{c,\mathrm{peak}}^{\pi,P}(s)\right]\le b.
\end{align}
This formulation seeks to maximize the worst-case expected cumulative reward while ensuring that the worst-case peak constraint remains below the prescribed threshold.

For the uncertainty set, one natural choice is the rectangular uncertainty set
\[
\mathcal{P}_{(s,a)}
=
\left\{
P \in \Delta(S) : D\!\left(P, P_0(\cdot \mid s,a)\right) \leq \rho
\right\},
\]
where \(D(\cdot,\cdot)\) measures the distance between two probability distributions, and $P_0$ is the nominal model encountered in simulator. Rectangular uncertainty sets are particularly attractive because they often lead to tractable robust dynamic programming formulations, whereas evaluating the robust value function under non-rectangular uncertainty sets is, in general, NP-hard \cite{iyengar2005robust}.

For continuous state spaces, one may instead consider uncertainty sets defined through an integral probability metric (IPM)~\cite{zhou2023natural}, which admits an efficient representation of the worst-case value function under linear function approximation. In particular, the IPM between two distributions \(p\) and \(q\) is defined as
\[
d_F(p,q)
=
\sup_{f\in F}\left\{p^\top f - q^\top f\right\},
\qquad
F \subseteq \mathbb{R}^{|S|}.
\]
Under suitable choices of \(F\), the IPM recovers important distributional distances; for example, under special conditions it reduces to the total variation distance~\cite{muller1997integral}. Moreover, IPM-based uncertainty sets yield a closed-form expression for the robust Bellman operator under linear approximation for value function (i.e., $V_{c,\mathrm{peak}}^{\pi}(s)=\Psi(s)^Tw^{\pi,c}$)
\begin{equation}
L_P V_{c,\mathrm{peak},w}^{\pi}
=
(1-\gamma)c(s,a) + \gamma V_{c,\mathrm{peak},w}^{\pi} + \rho \|w_{c,2:d}\|_2.
\label{eq:IPM}
\end{equation}
Here, $L_P$ is the worst case robust Bellman operator, defined as $L_p V_{c,\mathrm{peak}}^\pi = c(s,a)+ \sup_{P}\langle P (\cdot|s,a),V_{c,\mathrm{peak}}^{\pi,P}(\cdot)\rangle$
\section{Solution Methodology}
In this section, we discuss our problem to solve peak-cost constrained robust MDP as described in (\ref{eq:robust_rcrl}).


\subsection{Robust Peak-Cost Constrained MDP Formulation}
Note that since the problem in (\ref{eq:rcrl}) may not have a zero duality gap we have to consider a surrogate problem. Inspired from \cite{gangulyefficient}, we consider the following surrogate problem for $\beta>0$. 
\begin{equation}
   \min_{\pi} \max\{\max_{P} J_r^{\pi,P}/\beta, (\max_{P,s\in \mathcal{S}^{\pi,P}} V_{c,\mathrm{peak}}^{\pi,P}(s)-b)\}\label{eq:main_obj}
\end{equation}
\cite{gangulyefficient} adopted this surrogate problem for the robust CMDP problem, here, we apply for the robust peak cost constrained problem. Note that when the policy $\pi$ is infeasible and $\beta$ is large enough, then, it would try to optimize the constraint value function to make it satisfy the constraint. On the other hand, when the policy is feasible then it would optimize the objective if the reward is positive if the peak-cost constraint is satisfied. 

  The effectiveness of the surrogate problem in (\ref{eq:main_obj}) is shown in the following proposition. 
  \begin{proposition}\label{prop:equivalence}
Let $\hat{\pi}^*$ be an optimal solution of (\ref{eq:main_obj}), then $J_r^{\hat{\pi^*}}\geq J_r^{\pi^*}$ where $\pi^*$ is an optimal solution for (\ref{eq:robust_rcrl}). Further if $\beta\geq C_\text{max}/\epsilon$, then $(\max_{P,s\in \mathcal{S}^{\pi,P}}V_{c,\mathrm{peak}}^{\pi,P}(s)-b)_+\leq \epsilon$.
  \end{proposition}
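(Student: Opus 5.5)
The plan is to work directly with the two scalar functions that make up the surrogate. Write
\[
R(\pi)=\max_{P\in\mathcal{P}} J_r^{\pi,P},\qquad G(\pi)=\max_{P\in\mathcal{P}}\max_{s\in\mathcal{S}^{\pi,P}} V_{c,\mathrm{peak}}^{\pi,P}(s)-b,
\]
so that (\ref{eq:robust_rcrl}) reads $\min_\pi R(\pi)$ s.t.\ $G(\pi)\le 0$, and (\ref{eq:main_obj}) reads $\min_\pi f(\pi)$ with $f(\pi)=\max\{R(\pi)/\beta,\,G(\pi)\}$. Recall that $J_r$ is the \emph{negative} reward. I read the first claim as saying that $\hat\pi^*$ attains at least the robust reward of $\pi^*$, which in terms of $J_r$ is $R(\hat\pi^*)\le R(\pi^*)$.

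I will use the normalization $0\le R(\pi)\le C_{\max}$ for all $\pi$, with $C_{\max}$ a uniform bound on the robust value, e.g.\ $r_{\max}/(1-\gamma)$. This holds when rewards are nonpositive, which can always be arranged by subtracting a constant from $r$: on an infinite discounted horizon the shift changes every $J_r^{\pi,P}$ by the same constant, so neither problem's optimizers change. I expect this normalization to be the one genuinely delicate point. Without $R\ge 0$, it can happen that $f(\pi^*)=G(\pi^*)$, and then optimality only yields $R(\hat\pi^*)\le \beta G(\pi^*)$, which cannot be compared with $R(\pi^*)$. The positivity of the objective is exactly what rules out this case, so I would state it explicitly as a standing assumption, as is done in \cite{gangulyefficient}.

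For the first claim, $\pi^*$ is feasible by assumption, so $G(\pi^*)\le 0\le R(\pi^*)/\beta$ and therefore $f(\pi^*)=R(\pi^*)/\beta$. Since $\hat\pi^*$ minimizes $f$,
\[
R(\hat\pi^*)/\beta\le f(\hat\pi^*)\le f(\pi^*)=R(\pi^*)/\beta .
\]
Multiplying by $\beta>0$ gives $R(\hat\pi^*)\le R(\pi^*)$. So the surrogate solution's worst-case negative reward is no larger than that of $\pi^*$, i.e.\ its robust reward is at least as good.

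For the constraint claim, the same chain gives
\[
G(\hat\pi^*)\le f(\hat\pi^*)\le f(\pi^*)=R(\pi^*)/\beta\le C_{\max}/\beta .
\]
If $\beta\ge C_{\max}/\epsilon$, this is at most $\epsilon$. Since $\epsilon>0$, taking positive parts gives $(G(\hat\pi^*))_+\le\epsilon$, which is the stated bound on the worst-case peak-cost violation. Finally, I would check that $f$ is well defined: each inner $\max_P$ and the reachable-set maximum should be attained, or else replaced by suprema. This is harmless because the argument uses only the inequalities $f\ge R/\beta$ and $f\ge G$ together with the optimality of $\hat\pi^*$.
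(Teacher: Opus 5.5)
Your proof is correct and is essentially the paper's argument: feasibility of $\pi^*$ gives $f(\pi^*)=R(\pi^*)/\beta$, optimality of $\hat\pi^*$ gives $R(\hat\pi^*)/\beta\le f(\hat\pi^*)\le f(\pi^*)$ and likewise for $G(\hat\pi^*)$, and then $R(\pi^*)\le C_{\max}$ finishes. Your chain states the optimality inequality $f(\hat\pi^*)\le f(\pi^*)$ in the right direction, which the paper's write-up reverses, and it makes explicit the nonnegativity of $J_r$ that the paper uses only tacitly.

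One correction: the constant-shift remark does not make $R\ge 0$ without loss of generality. Shifting $r$ changes the balance between $R/\beta$ and $G$ in the surrogate, so it can change $\hat\pi^*$, and without $R\ge 0$ claim (i) can genuinely fail. It must therefore be kept as a standing assumption, as you ultimately propose.
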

  The above proof is adapted from Proposition 1 in \cite{gangulyefficient}, and has been shown in Appendix~\ref{sec:proof}. \cite{gangulyefficient} also provides the convergence rate guarantee for a robust natural policy gradient which solves (\ref{eq:main_obj}) for the finite state-space. Our goal is to apply for the large state-space problem, hence, we do not consider the natural policy gradient. We rather consider an robust actor-critic-based approach, and we use PPO-method \cite{schulman2017proximal} for the actor. 

\section{Proposed Algorithm}
In this section, we present our approach for solving the robust peak-constrained MDP problem formulated in \eqref{eq:main_obj}. Our algorithm is based on the Actor-Critic framework modified for the peak-constrained, and the robust value function. 

\subsection{Actor--Critic Optimization}
To account for robustness, we adapt the temporal-difference (TD) errors for both the reward critic and the peak-cost critic using the IPM-based robustification in \eqref{eq:IPM}. Specifically, we copnsider the following advantage function
\begin{align}
\delta_{t,r}
&=
r_t + \gamma V_r(s_{t+1}) - V_r(s_t) + \rho_r \|w_{r,2:d}\|_2,
\label{eq:td_error_reward}
\\
\delta_{t,c}
&=
\big(1-\gamma \big)c_t+ \gamma \max\!\big(c_t,\, V_{c,\mathrm{peak}}(s_{t+1})\big) \nonumber \\ & \qquad \qquad \qquad - V_{c,\mathrm{peak}}(s_t) + \rho_c \|w_{c,2:d}\|_2.
\label{eq:td_error_cost}
\end{align}
Note that the added regularization is due to the robust value function metric as discussed in (\ref{eq:IPM}).
For smooth approximation, instead of max operator, we use Log-Sum-Exponential(LSE).


Using these TD errors, we construct the generalized advantage estimates (GAEs) for the reward objective and 1-step advantage estimate for peak-cost objective as
\begin{align}
A_{t,r}
&=
\sum_{l=0}^{\infty} (\gamma \lambda)^l \delta_{t+l,r},
\label{eq:gae_reward}
\\
A_{t,c}
&=
\delta_{t,c}.
\label{eq:gae_cost}
\end{align}

Here $\lambda$ is the GAE decay. The corresponding value targets for the reward and peak-cost critics are given by
\begin{align}
v_{\text{target},r}(s_t)
&=
A_{t,r} + V_r(s_t),
\label{eq:targetv_reward}
\\
v_{\text{target},c}(s_t)
&=
A_{t,c} + V_{c,\mathrm{peak}}(s_t)
\label{eq:targetv_cost}
\end{align}

The actor is updated using the PPO-style clipped objective \cite{schulman2017proximal} depending on whether we are optimizing the objective or the constraint
\begin{align}
L_{\text{actor}}
=
\mathbb{E}_t \!\left[
\min\!\left(
r_t(\theta)\, A_{t,j},\;
\mathrm{clip}\!\big(r_t(\theta),\, 1-\epsilon,\, 1+\epsilon\big)\, A_{t,j}
\right)
\right],
\label{eq:actor_loss}
\end{align}
where \(r_t(\theta)\) denotes the policy ratio and \(A_{t,j}\) is the advantage signal used for the policy update, for $j=r,c$.

The robust critic losses for the reward and peak-cost value functions are defined as
\begin{align}
L_{V_r}
&=
\Big(V_r(s_t) - v_{\text{target},r}(s_t)\Big)^2,
\label{eq:loss_reward}
\\
L_{V_{c,\mathrm{peak}}}
&=
\Big(V_{c,\mathrm{peak}}(s_t) - v_{\text{target},c}(s_t)\Big)^2.
\label{eq:loss_cost}
\end{align}

Algorithm~\ref{algo:main_algo} summarizes the complete training procedure for the robust peak-constrained MDP.
\begin{algorithm}
\caption{\ours}
\label{algo:main_algo}
\KwIn{Initial policy parameters $\theta_0$, initial value function parameters $\phi_{r,0}$ and $\phi_{c,0}$}
\KwParam{Discount factor $\gamma$, Peak-Cost budget $b$, GAE decay $\lambda$, Lagrangian multiplier $\beta$, PPO clipping parameter $\epsilon$, learning rates $\alpha_{\text{actor}}$, $\alpha_{\text{critic,r}}$, $\alpha_{\text{critic,c}}$ Reward/Cost Selector $\psi(.)$}

\textbf{Initialize:} Policy $\pi_\theta$ with parameters $\theta_0$, robust value functions $V_r$ and $V_{c,\mathrm{peak}}$ with parameters $\phi_{r,0}$ and $\phi_{c,0}$\;

\For{$t = 0, 1, \dots, T-1$}
 {
    Collect a set of trajectories $\mathcal{D}_t = \{\tau_i\}_{i=1}^N$ by running policy $\pi_\theta$ in the environment\;

    $A = \frac{1}{|D_t|} \sum_{s \in D_t} \frac{V_r(s)}{\beta}$\;
    $B = \max_{s \in D_t} \left(V_{c,\mathrm{peak}}(s) - b\right)$\;
    $F = \arg\max \{A, B\}$\;

    Estimate Advantage \& Update Critic: $\psi(F)$\;
    
    Update Policy using Eqn~\ref{eq:actor_loss} and update $\theta_{t}$ using SGD with learning rate $\alpha_{\text{actor}}$\;
 }
 \textbf{Output:} Optimized policy $\pi_\theta$
\end{algorithm}

\begin{algorithm}
\caption{Reward/Cost Selector $\psi$}
\label{algo:update_reward_or_cost}
\KwIn{Choice function F, Discount factor $\gamma$, GAE decay $\lambda$,PPO clipping parameter $\epsilon$, learning rates $\alpha_{\text{actor}}$, $\alpha_{\text{critic,r}}$, $\alpha_{\text{critic,c}}$}


\uIf{$F = 0$}{
    \textbf{Update Reward Critic:}\;
    Compute TD errors $\delta_{t,r}$ using Eqn~\ref{eq:td_error_reward}\;    
    Estimate advantages $A_{t,r}$ using Eqn~\ref{eq:gae_reward}\;
}
\uElseIf{$F = 1$}{
    \textbf{Update Cost Critic:}\;
    Compute TD errors $\delta_{t,c}$ using Eqn~\ref{eq:td_error_cost}\;    
    Estimate advantages $A_{t,c}$ using Eqn~\ref{eq:gae_cost}\;
}
Estimate targets $v_{target,r}$ using Eqn~\ref{eq:targetv_reward}\;
Estimate reward critic loss $L_{V_r}$ using Eqn~\ref{eq:loss_reward}\;
Update critic parameters $\phi_{r}$ using SGD with learning rate $\alpha_{\text{critic,r}}$\;

Estimate targets $v_{target,c}$ using Eqn~\ref{eq:targetv_cost}\;
Estimate cost critic loss $L_{V_{c,\mathrm{peak}}}$ using Eqn~\ref{eq:loss_cost}\;
Update critic parameters $\phi_{c}$ using SGD with learning rate $\alpha_{\text{critic,c}}$\;
\end{algorithm}
. Note that we take a sample average of the reward value function scaled by $1/\beta$ ($A$), if it is higher than the maximum value function corresponding to the state sampled from the trajectory, then we update the actor based on the average reward value function, otherwise we update the actor based on the constraint value function to minimize the constraint violation.

\section{Experiments}

\subsection{Baselines}
We compare \ours\ with two baselines:
\begin{enumerate}
    \item Primal Dual method for Peak-Cost Constraint which is used in \cite{yu2022reachability,ganai2023iterative}.
    \item Using Surrogate Objective in \eqref{eq:main_obj} trained in Non-Perturbed environment
\end{enumerate}
\ours\ refers to Surrogate Objective trained on Perturbed Environment.

\subsection{OpenAI Gym Environment Results}

The CartPole environment is a widely used reinforcement learning (RL) benchmark introduced in OpenAI Gym \cite{brockman2016openai}. It consists of a cart moving along a frictionless track, with a pole attached to the cart by an unactuated joint. The agent applies forces to the cart to move it left or right, aiming to balance the pole upright while keeping the cart within specified boundaries.\\

\parab{Constrained CartPole Environment} In our work, we use a modified version of the standard CartPole environment by introducing \textbf{safety and cost constraints}. The state space remains the same as the standard CartPole and is represented as:
$
\text{Observation\ Space} = \{ (x, \dot{x}, \theta, \dot{\theta}) \mid x, \dot{x}, \theta, \dot{\theta} \in \mathbb{R} \}.
$
where \(x\) is the cart position, \(\dot{x}\) is the cart velocity, \(\theta\) is the pole angle in radians, and \(\dot{\theta}\) is the pole angular velocity. The observation space is continuous and unbounded. The action space differs from the standard environment. Instead of discrete actions (\(\{0, 1\}\)), we use a \textbf{continuous action space} where the agent can apply a force \(F \in [-10.0, 10.0]\).

The dynamics of the CartPole system are given by the following equations. The angular acceleration of the pole is:
\[
\ddot{\theta} = \tfrac{g \sin \theta + \cos \theta \left( -F - m_p l \dot{\theta}^2 \sin \theta \right) / (m_c + m_p)}{l \left( \tfrac{4}{3} - \tfrac{m_p \cos^2 \theta}{m_c + m_p} \right)},
\]
and the horizontal acceleration of the cart is:
\[
\ddot{x} = \tfrac{F + m_p l \left( \dot{\theta}^2 \sin \theta - \ddot{\theta} \cos \theta \right)}{m_c + m_p}.
\]
For our experiments, \(g = 9.8 \, \text{m/s}^2\) is the gravitational acceleration, \(m_c = 1.0 \, \text{kg}\) is the cart's mass, \(m_p = 0.1 \, \text{kg}\) is the pole's mass, and \(l = 0.5 \, \text{m}\) is the half-length of the pole. The time step for simulation is \(\tau = 0.02 \, \text{s}\).

The constrained environment enforces safety by requiring the cart's position to remain within safe zone \(x \in [-1, 1]\). In general the pole angle should stay within \(x \in [-2.4,2.4]\) and \(\theta \in [-12^\circ, 12^\circ]\). If these limits are exceeded before 450 steps, the episode terminates early, and an \textbf{early termination penalty} of \(10.0\) is added to the total cost. The agent incurs a cost when the cart's position exceeds \(|x| > 1.0\), with the cost proportional to the absolute distance from the center:
\[
\text{cost} = 
\begin{cases} 
|x| & \text{if } |x| > 1.0 \\
10 & \text{if early termination}
\end{cases}
\]
The objective is to maximize cumulative rewards while keeping the peak cost below a predefined baseline. The maximum episode length is 500 steps.\\



\parab{Perturbed CartPole Environment} To model transition uncertainty, we perturb gravity during training as $g' = g + \mathcal{N}(0,0.5)$, requiring the policy to remain robust under uncertain dynamics.

\parab{Training Details}
The architecture comprises three main networks: the Actor, the Critic, and the Peak-Cost Critic, each structured as a feedforward neural network with three layers (two hidden layers and one output layer). The Actor network features two hidden layers, each containing 64 neurons with a ReLU activation function, and it outputs parameters for the Gaussian distribution. Similarly, the Peak-Cost network consists of two hidden layers with 64 neurons each, utilizing a ReLU activation function, and it produces a single value representing the state value estimate. The Peak-Cost Critic follows the same architecture as the Critic, also featuring two hidden layers with 64 neurons and a ReLU activation function, outputting a single cost estimate. All the Actor and Critic networks use a learning rate of 3e-4.

\subsection{Experiments and Analysis}
For the experiments (\ref{exp:pd_ours} and \ref{exp:inference}), we used Lagrangian multiplier $\beta= 25$, baseline $b=2.0$, a warm start of 300 episodes(for \ours).
\begin{enumerate}
\item{\textbf{Effectiveness of our Surrogate Objective }}
\label{exp:pd_ours}
\begin{figure}
    \centering
    \begin{subfigure}{0.22\textwidth}
        \centering
        \includegraphics[width=\linewidth]{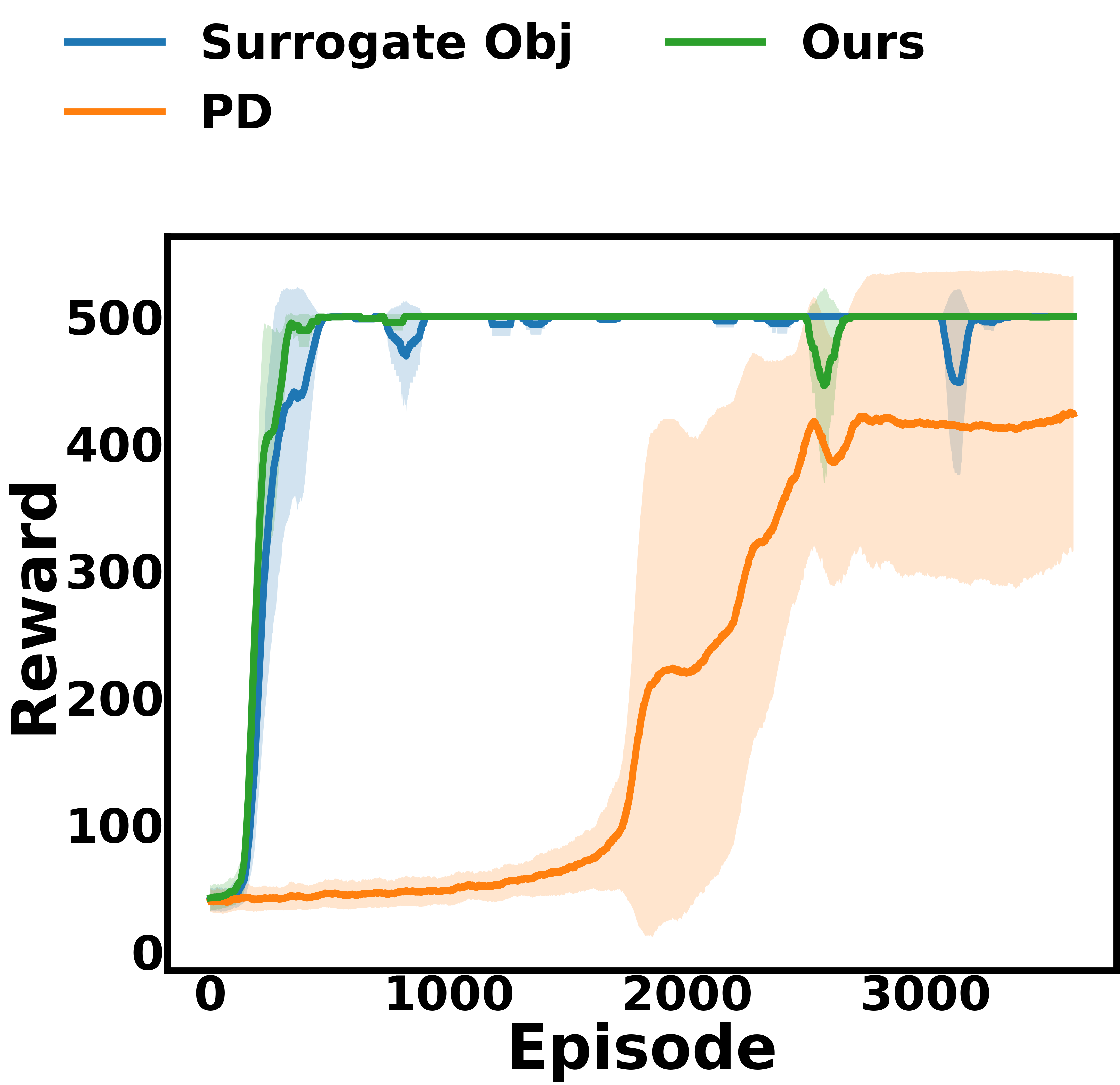}
        \caption{Returns}
    \end{subfigure}
    \begin{subfigure}{0.22\textwidth}
        \centering
        \includegraphics[width=\linewidth]{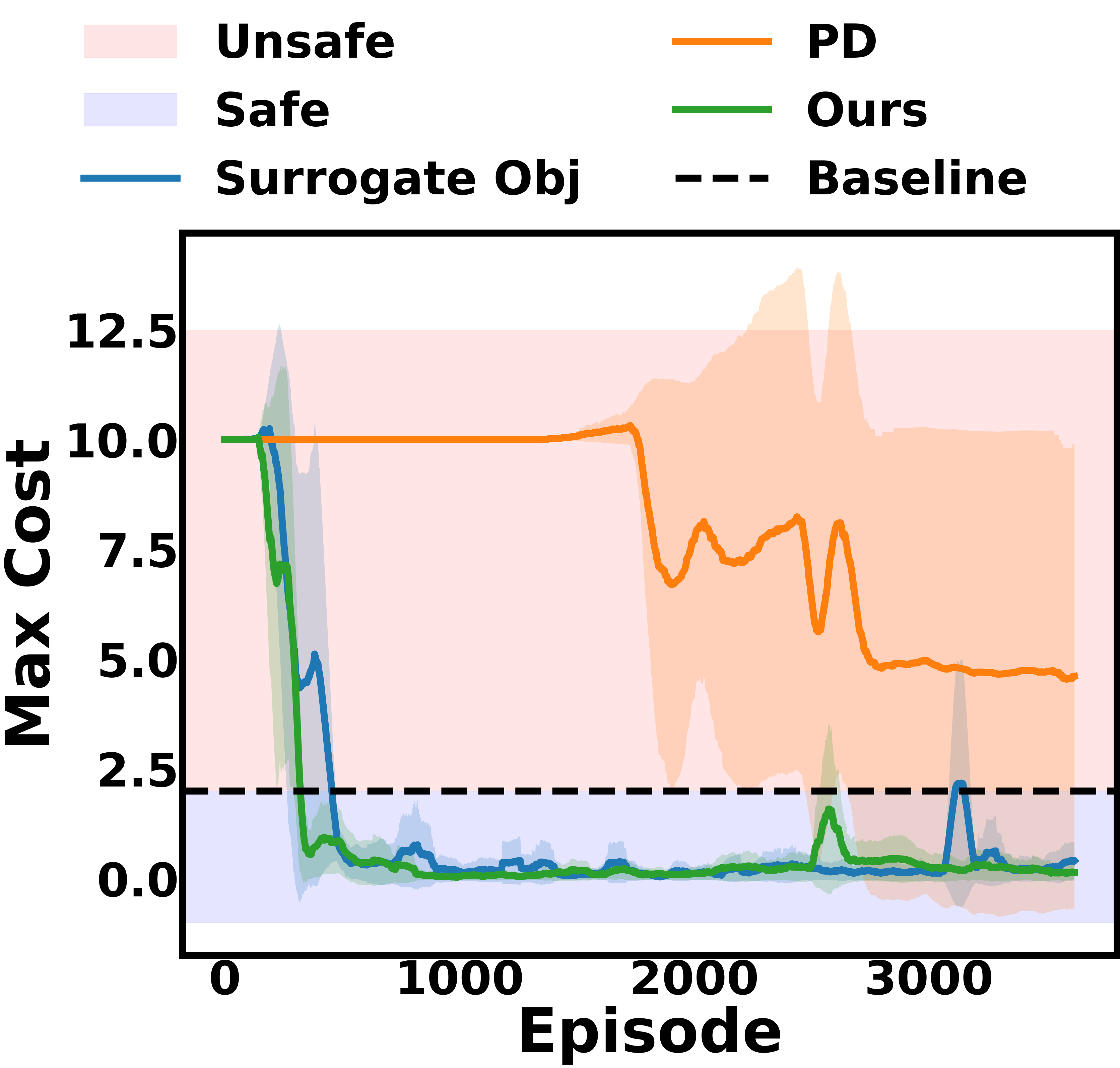}
        \caption{Peak-Cost}
    \end{subfigure}
    \caption{Comparison of Primal Dual(PD) and Surrogate Objective trained in non-perturbed environment, and \ours\ trained in perturbed CartPole environment. Here we want to maximize return while restrict peak-cost below baseline. \ours\ has learned peak-cost constraint and has higher rewards than PD.}
    \label{fig:pd_ours_train}
\end{figure}

Fig~\ref{fig:pd_ours_train} compares the training of Primal-Dual (PD), Surrogate Objective in \eqref{eq:main_obj} on a non-perturbed CartPole environment and \ours(Surrogate Objective trained on a perturbed CartPole environment). The Surrogate objective based methods successfully enforce the peak-cost constraint and achieve higher returns compared to PD. PD learns lower returns and has higher peak-cost than baseline because it learns a sub-optimal policy.

\item{\textbf{Inference in Perturbed Environment }}
\label{exp:inference}

\begin{figure}
    \centering
    \includegraphics[width=0.48\textwidth]{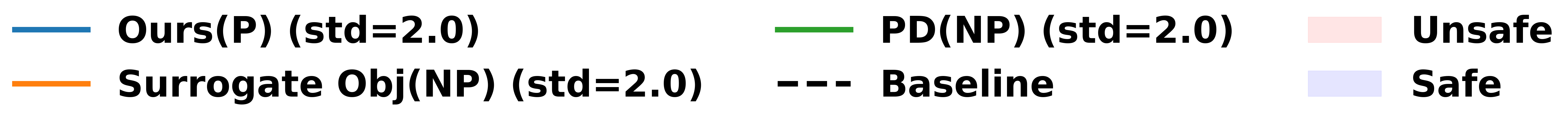} 
    \vspace{0.5cm} 
    \begin{subfigure}{0.23\textwidth}
        \centering
        \includegraphics[width=\linewidth]{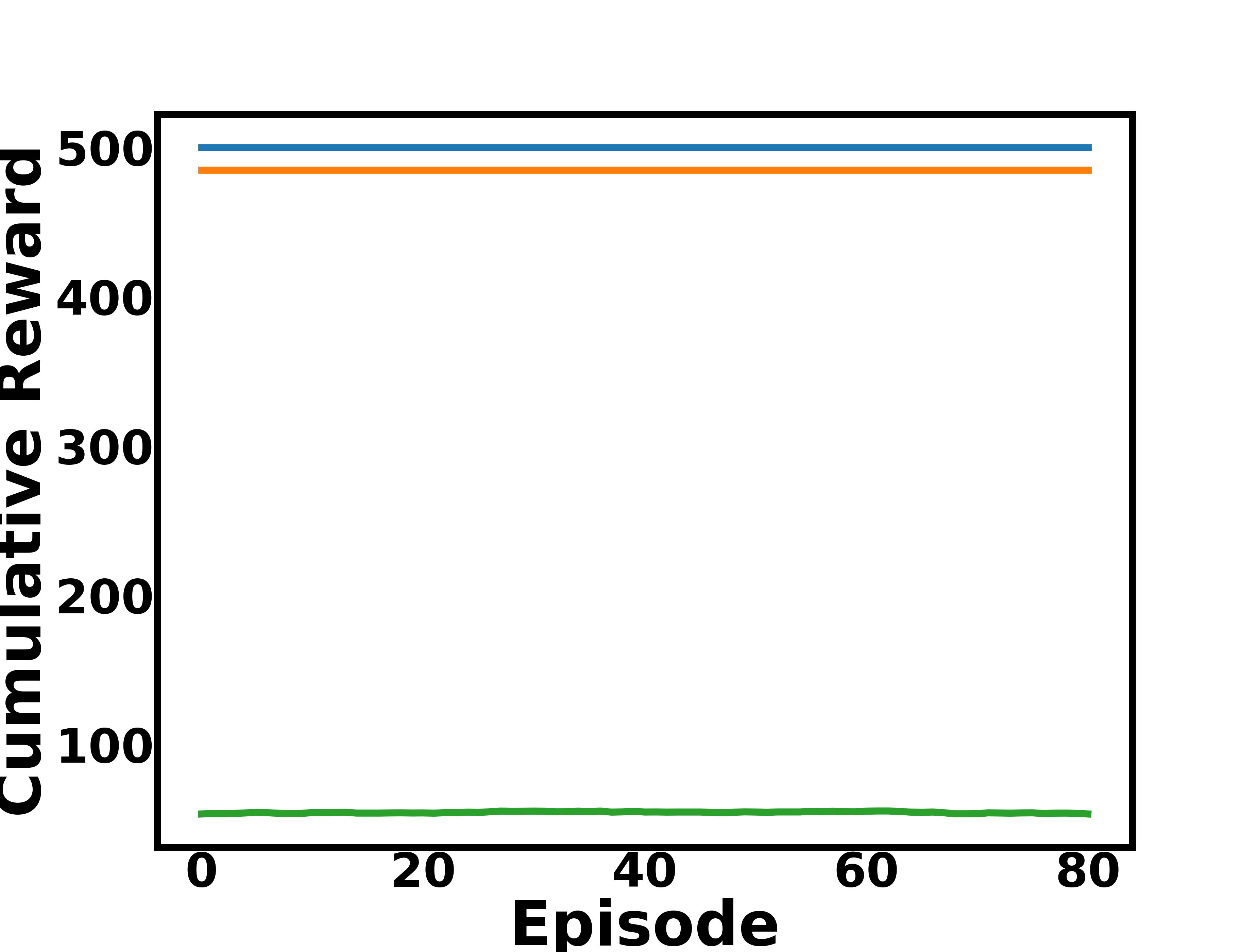}
        \caption{Returns}
    \end{subfigure}
    \begin{subfigure}{0.23\textwidth}
        \centering
        \includegraphics[width=\linewidth]{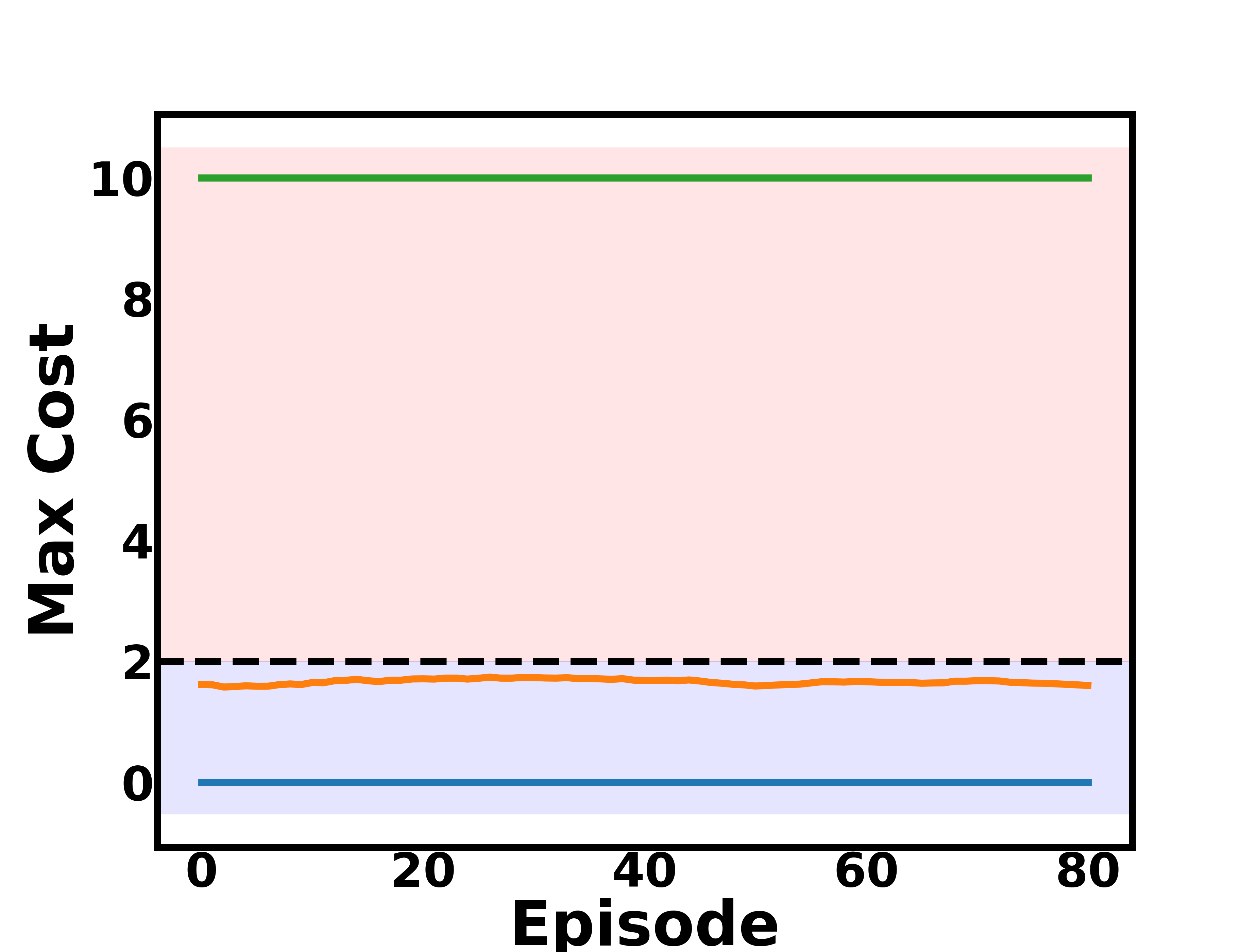}
        \caption{Peak-Cost}
    \end{subfigure}
    \caption{Comparison of inference performances of PD trained on non-perturbed(NP) environment, Surrogate Objective~\eqref{eq:main_obj} trained on NP environment and \ours\ trained in perturbed (P) environment.\ours\ performs best with a peak-cost of zero. PD fails to adapt to a perturbed environment at inference.}
    \label{fig:inference}
\end{figure}

The inference is done with a perturbation of $g' = g + \mathcal{N}(0,2.0)$. The comparison in Fig~\ref{fig:inference} highlights the superior performance of \ours\ trained in a perturbed (P) environment, achieving the best results with a peak-cost of zero. In contrast, Surrogate Objective, trained in non-perturbed (NP) environments, has a peak-cost below baseline and PD fails to meet the peak-cost constraint and adapt to the robustness required during inference, evident from its rewards as well. 
\end{enumerate}

\subsection{Training across Different Cost Functions}
We perform a study of \ours\ learning two different cost functions to investigate the generalizability of training our proposed approach.

\parab{Cost1 (C1)}: This cost penalizes deviations from the center and heavily penalizes early termination due to exceeding position or angle limits. A cost of equal to position error is incurred when the cart's position exceeds a distance of 1 from origin. If the cart's position is greater than 2.4 or pole angle is greater than $12^\circ$ before the termination limit of 450 steps, a penalty of \( 10.0 \) is added to the cost. The baseline used for this cost is $b=2$. Green line in Fig~\ref{fig:pd_ours_train} shows training plots using C1 for \ours.

\parab{Cost2 (C2)}: This cost incentivizes staying within safe position and angle limits to avoid termination. It penalizes deviations in position greater than 1.0 from origin and pole angle  greater than $8^\circ$. These costs increase as they approach termination thresholds (position $> 2.4$, angle $> 12^\circ$). The baseline used for this cost is $b=0.5$. Fig~\ref{fig:c2} shows training plots using C2. 
\begin{figure}
    \centering
    \begin{subfigure}{0.21\textwidth}
        \centering
        \includegraphics[width=\linewidth]{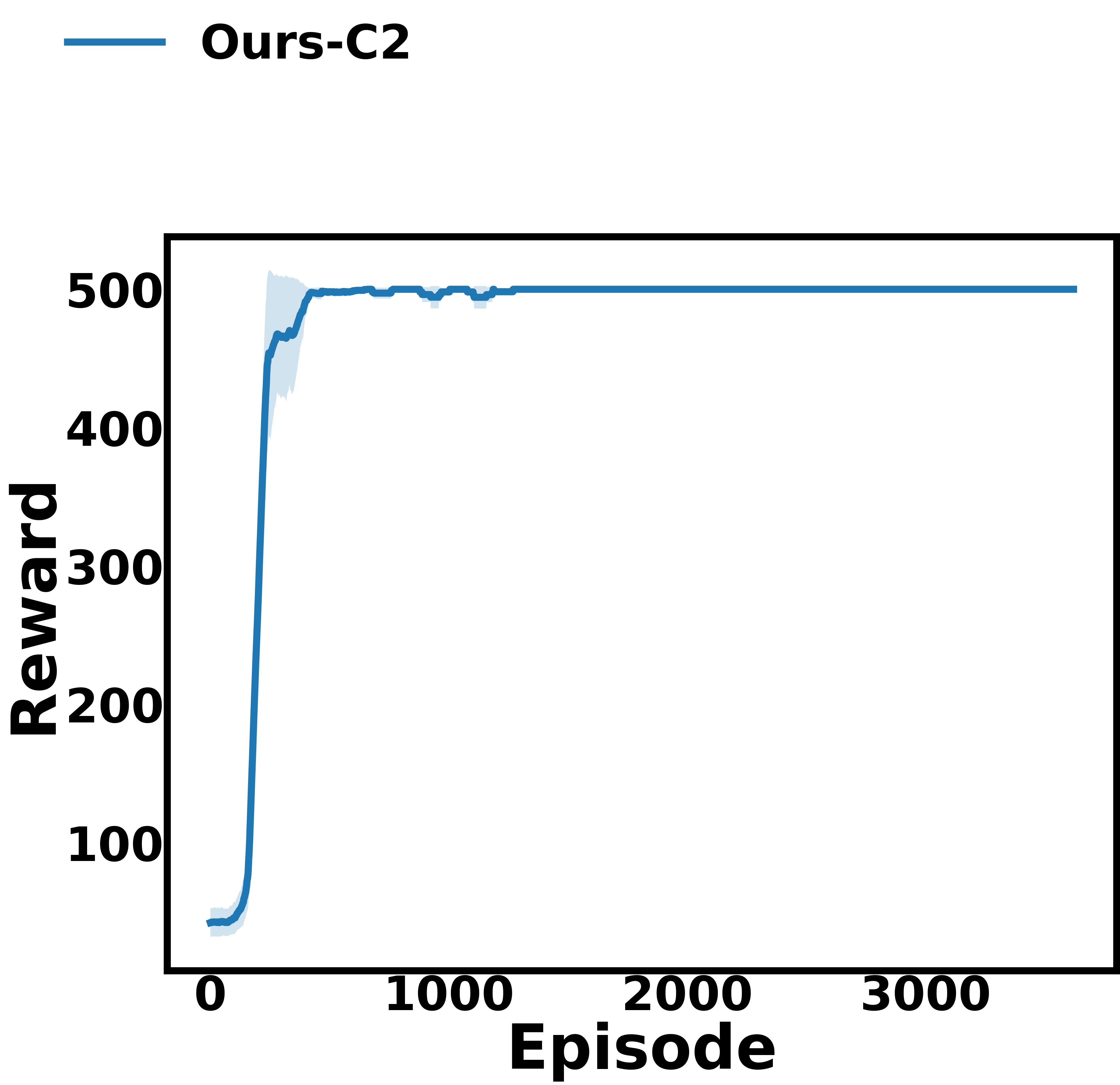}
        \caption{Returns}
    \end{subfigure}
    \begin{subfigure}{0.21\textwidth}
        \centering
        \includegraphics[width=\linewidth]{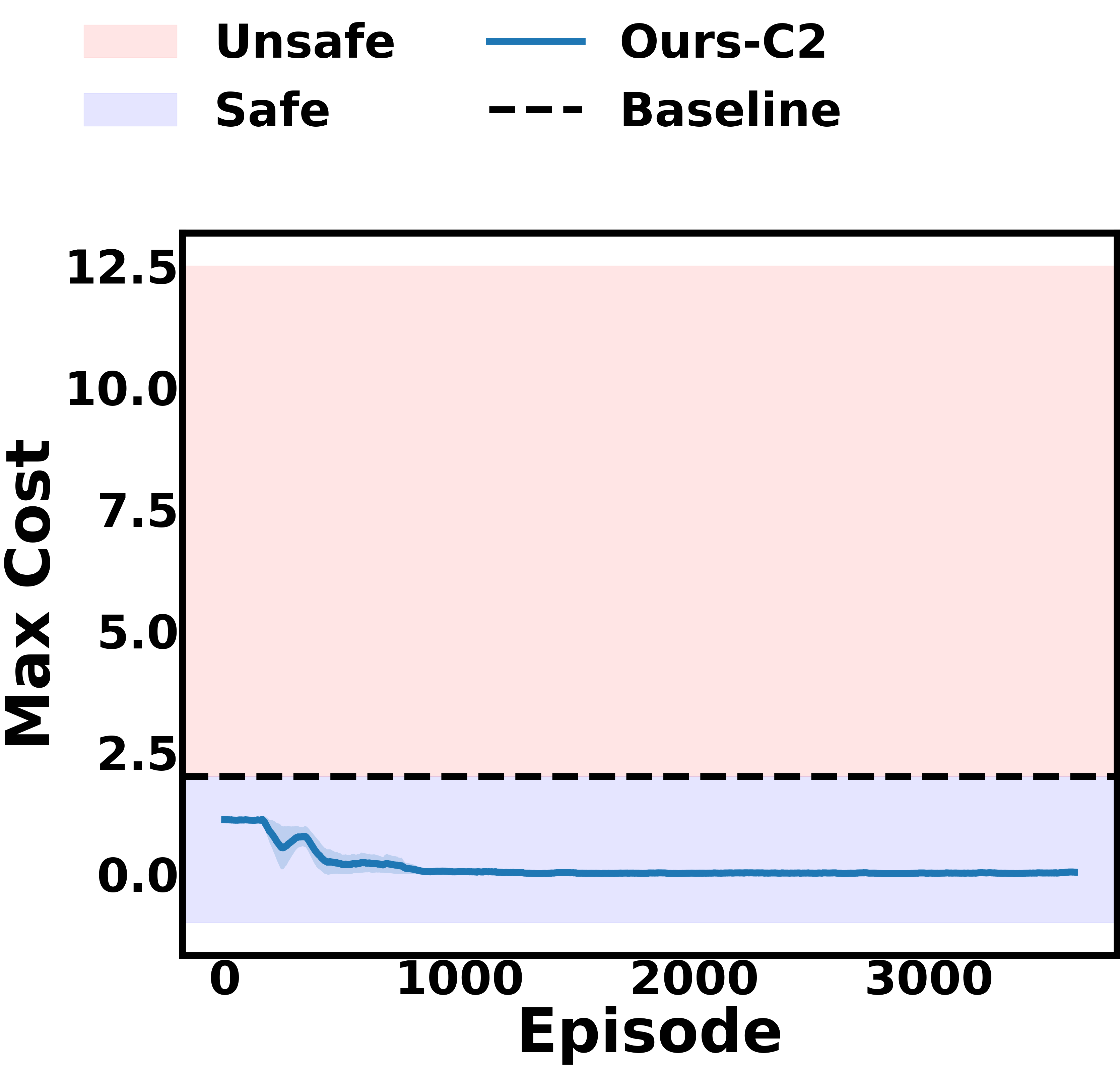}
        \caption{Peak-Cost}
    \end{subfigure}
    \caption{Learning \ours\ on C2. The policy learns the peak-cost below baseline..}
    \label{fig:c2}
\end{figure}

These results demonstrate the robustness and adaptability of \ours\ in effectively learning across different cost functions.

\subsection{Training across Different Levels of Perturbations}
We perform a study of \ours\ learning two different perturbation levels in the perturbed CartPole environment. This experiment is conducted to investigate how good \ours\ learns when perturbations in environment are different. Fig~\ref{fig:perturbation_train} shows the training curves for different perturbation levels. We perturb the gravity $g' = g + \mathcal{N}(0,\sigma)$. For comparison, we consider $\sigma =  \{0.5,2\}$. The learning curves demonstrate \ours\ learns different perturbation levels effectively.

\begin{figure}
    \centering
    \begin{subfigure}{0.22\textwidth}
        \centering
        \includegraphics[width=\linewidth]{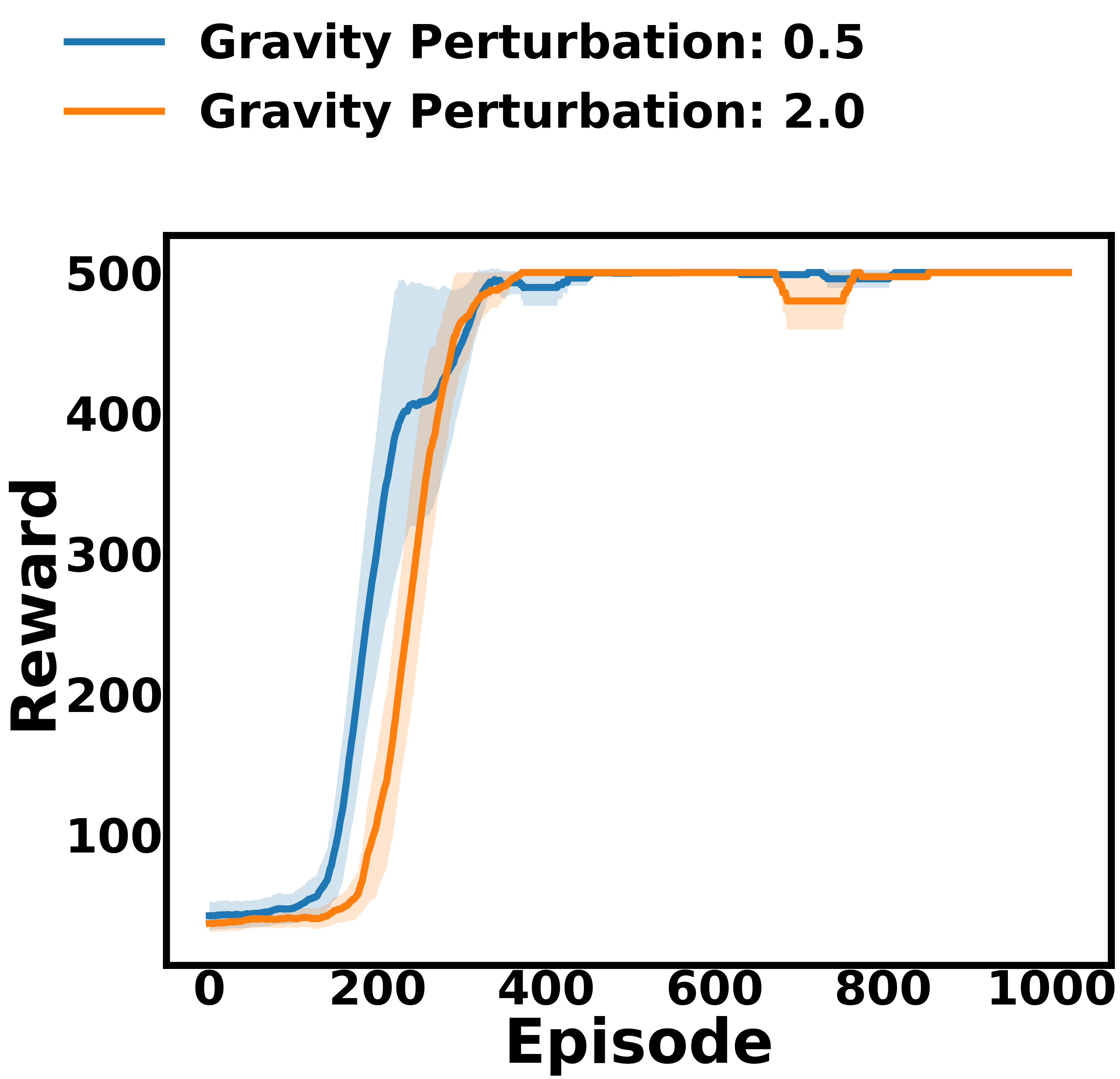}
        \caption{Returns}
    \end{subfigure}
    \begin{subfigure}{0.25\textwidth}
        \centering
        \includegraphics[width=\linewidth]{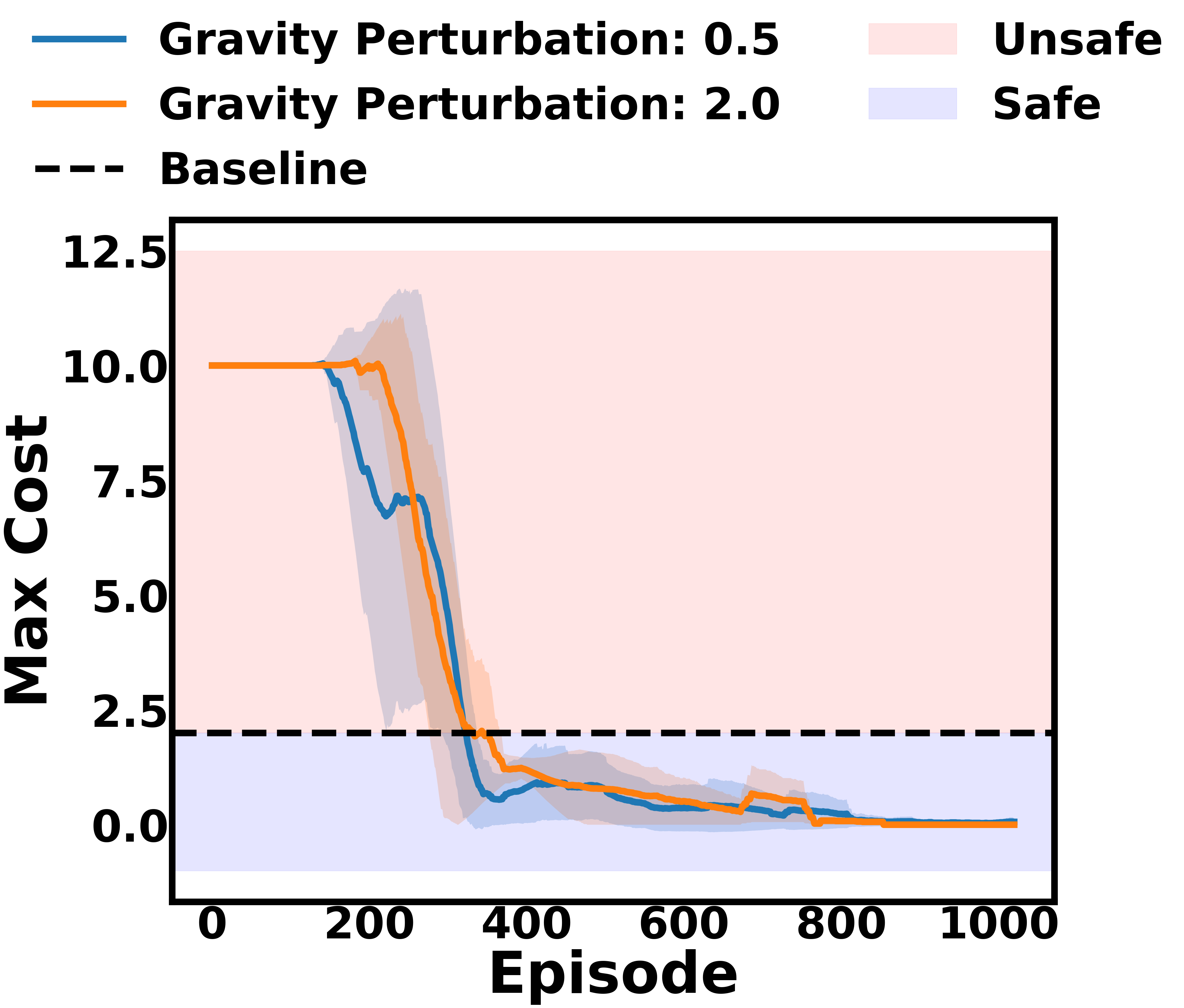}
        \caption{Peak-Cost}
    \end{subfigure}
    \caption{Comparison of training curves of \ours\ for different perturbation levels in the environment. The perturbation is introduced in gravity. \ours\ learns maximized returns and bounded peak-cost below baseline for both perturbation levels.}
    \label{fig:perturbation_train}
\end{figure}

\subsection{Mujoco Environment Results}
 \begin{figure*}[t]
    \centering

    \subcaptionbox{Ant: Training Returns\label{fig:ant_train_returns}}[0.24\textwidth]{%
        \includegraphics[width=\linewidth]{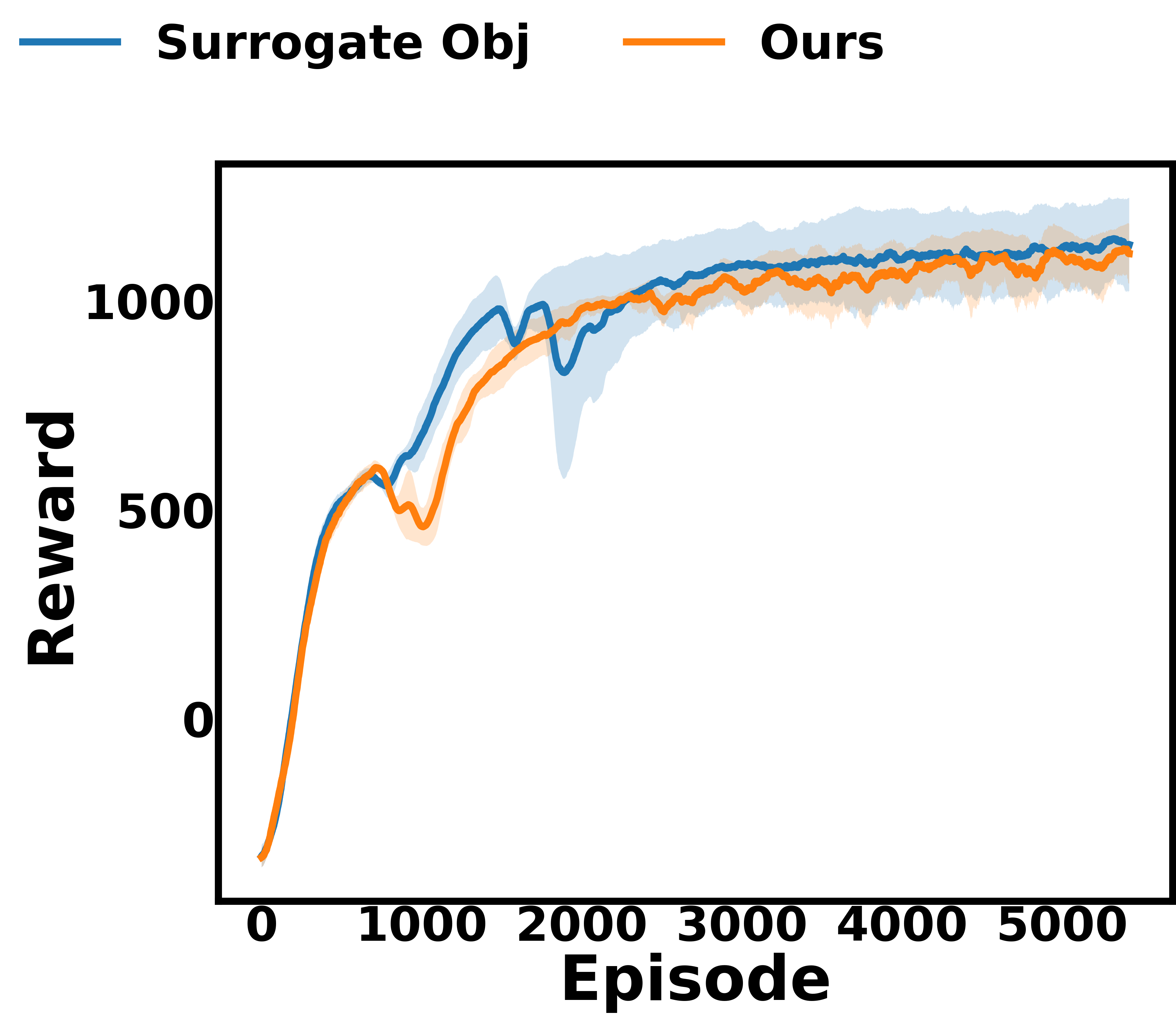}}
    \subcaptionbox{HalfCheetah: Training Returns\label{fig:halfcheetah_train_returns}}[0.24\textwidth]{%
        \includegraphics[width=\linewidth]{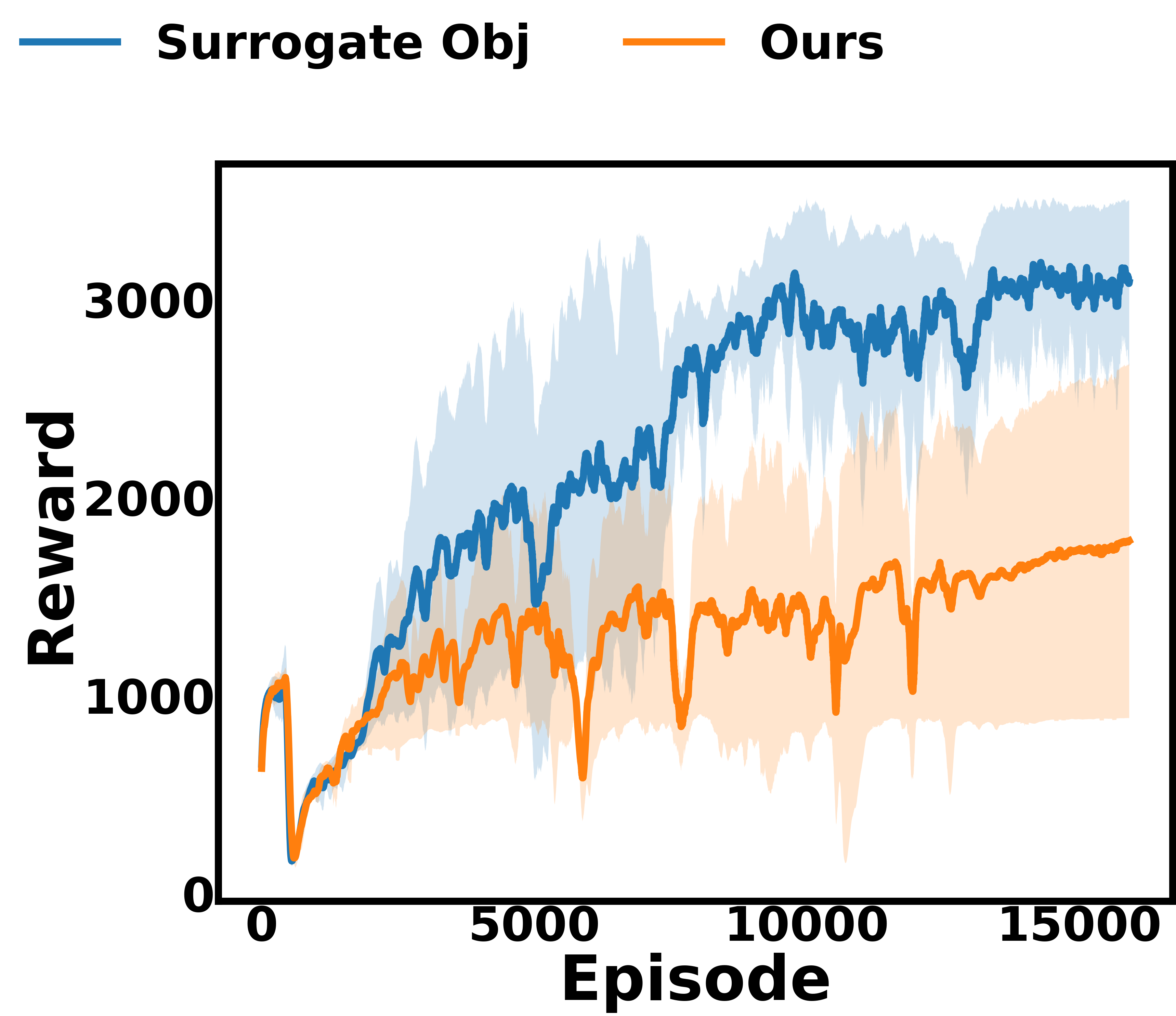}}
    \subcaptionbox{Humanoid: Training Returns\label{fig:humanoid_train_returns}}[0.24\textwidth]{%
        \includegraphics[width=\linewidth]{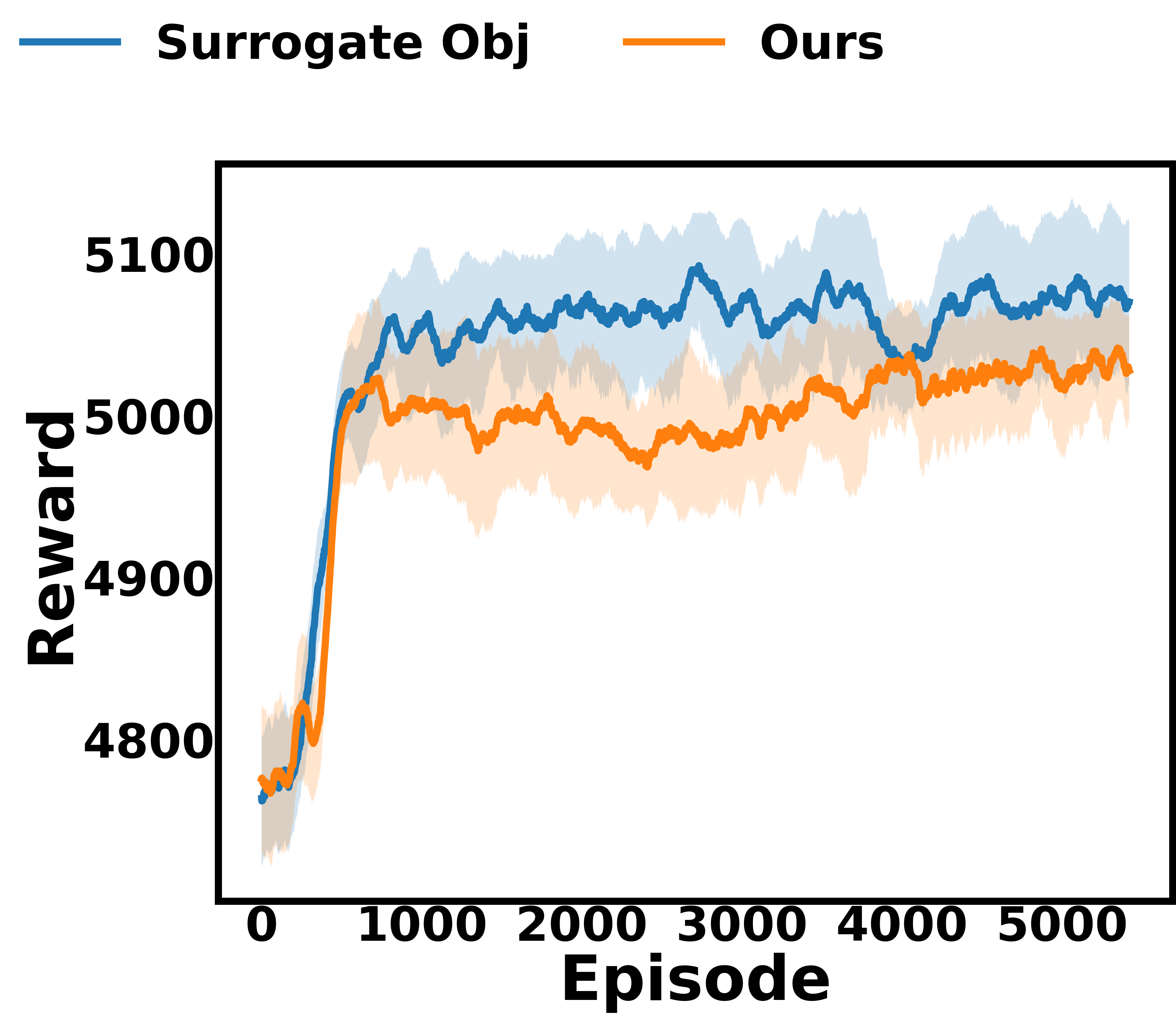}}
    \subcaptionbox{Swimmer: Training Returns\label{fig:swimmer_train_returns}}[0.24\textwidth]{%
        \includegraphics[width=\linewidth]{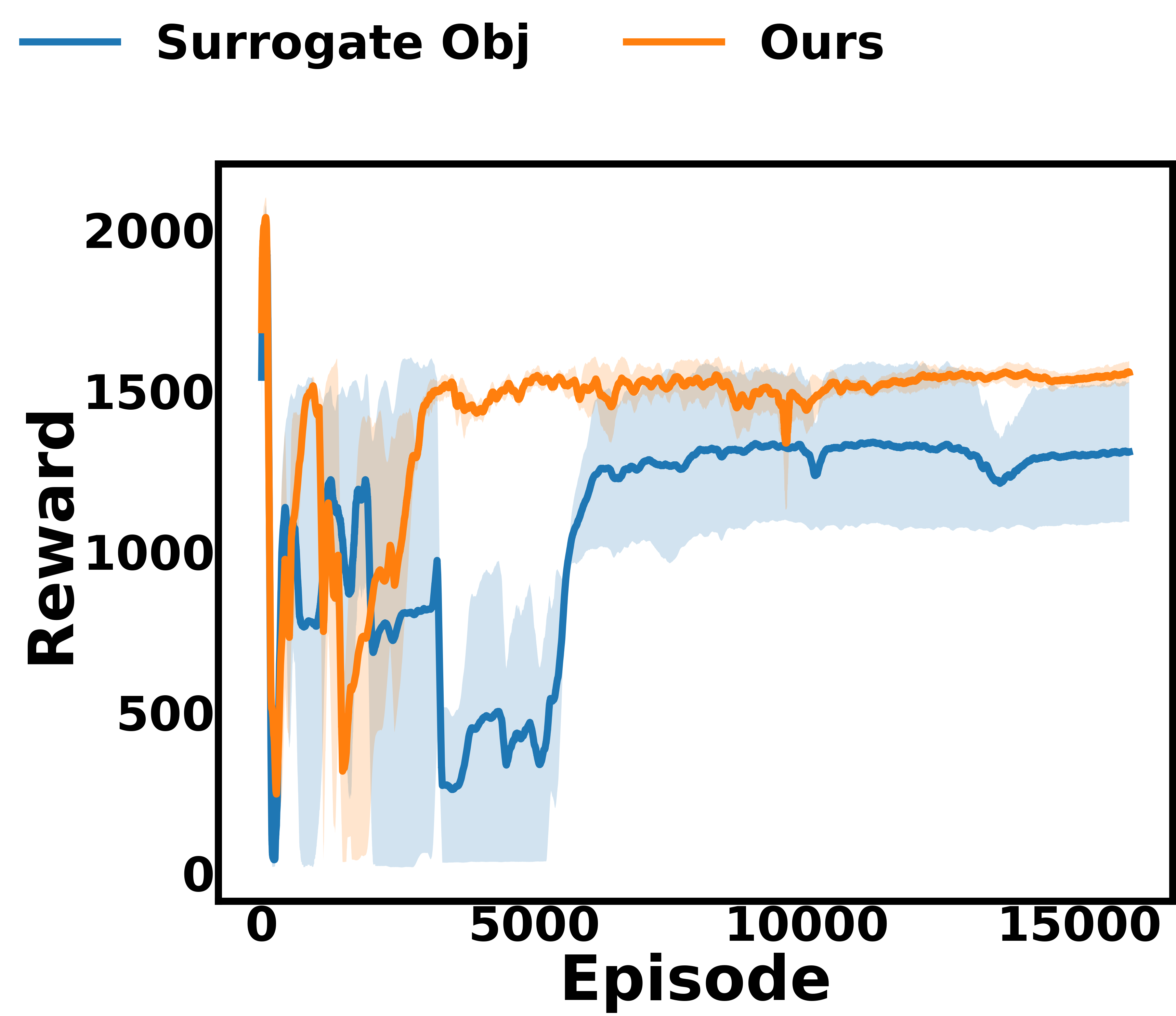}}

    \vspace{0.5em}

    \subcaptionbox{Ant: Training Max Cost\label{fig:ant_train_cost}}[0.24\textwidth]{%
        \includegraphics[width=\linewidth]{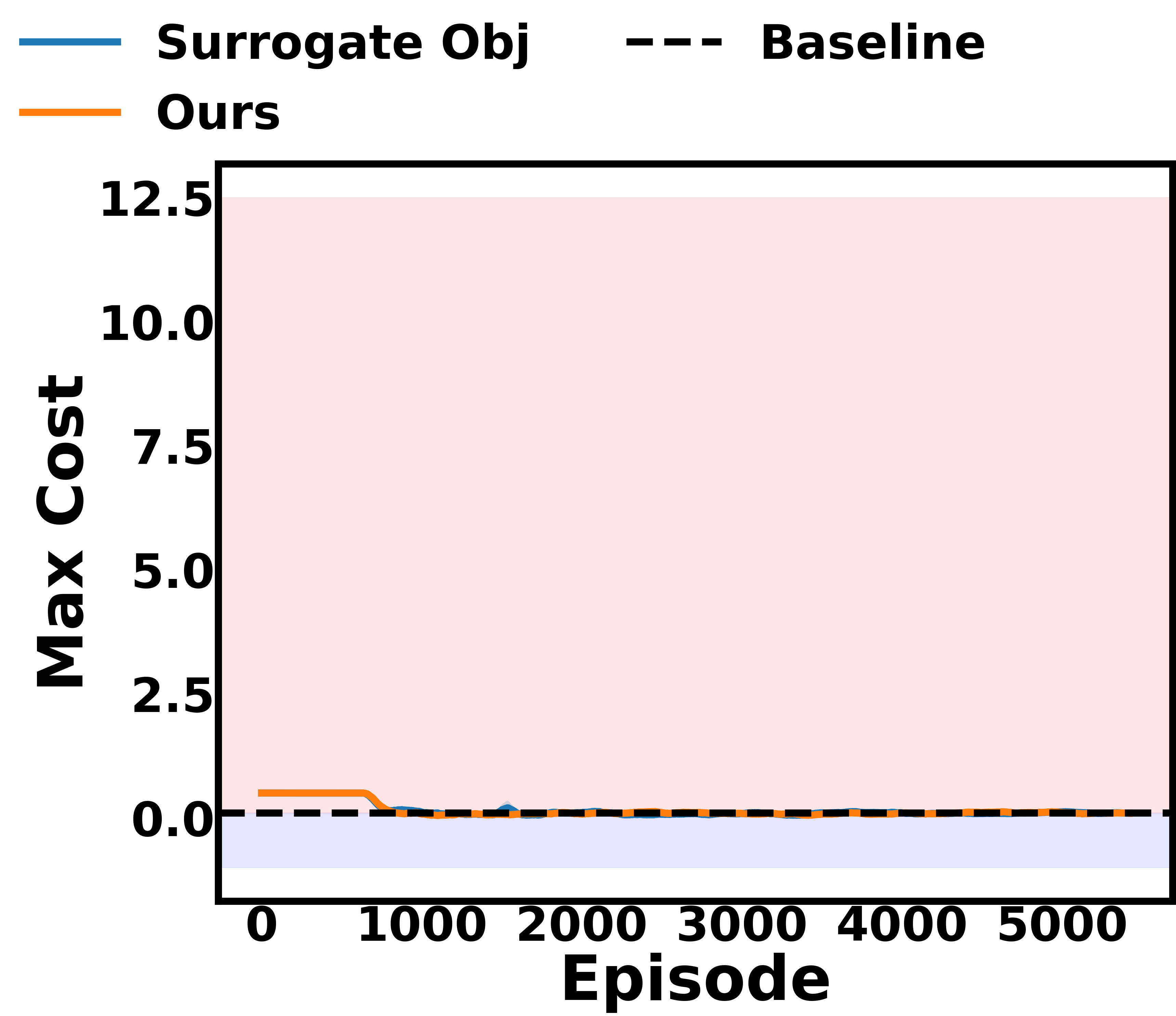}}
    \subcaptionbox{HalfCheetah: Training Max Cost\label{fig:halfcheetah_train_cost}}[0.24\textwidth]{%
        \includegraphics[width=\linewidth]{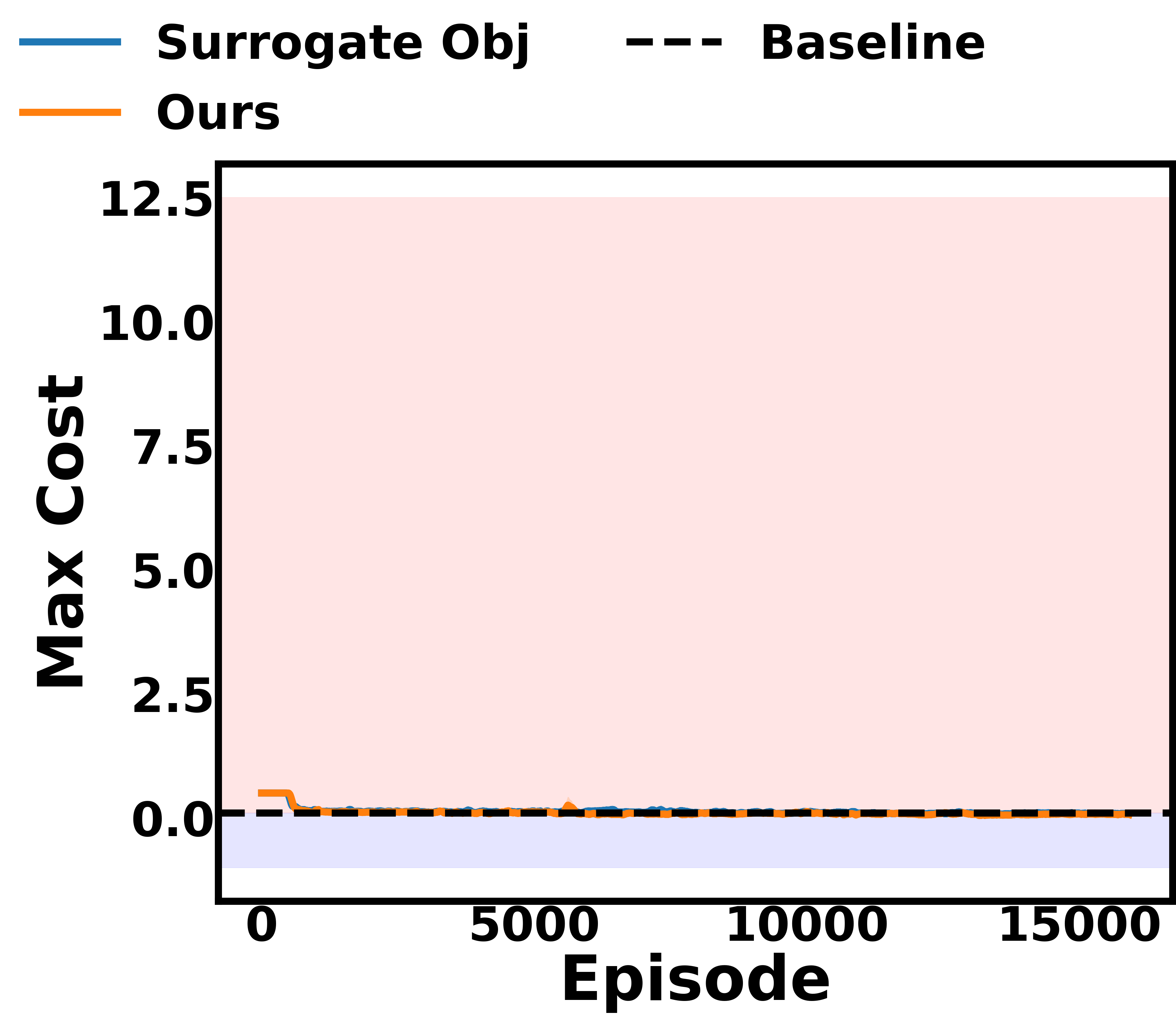}}
    \subcaptionbox{Humanoid: Training Max Cost\label{fig:humanoid_train_cost}}[0.24\textwidth]{%
        \includegraphics[width=\linewidth]{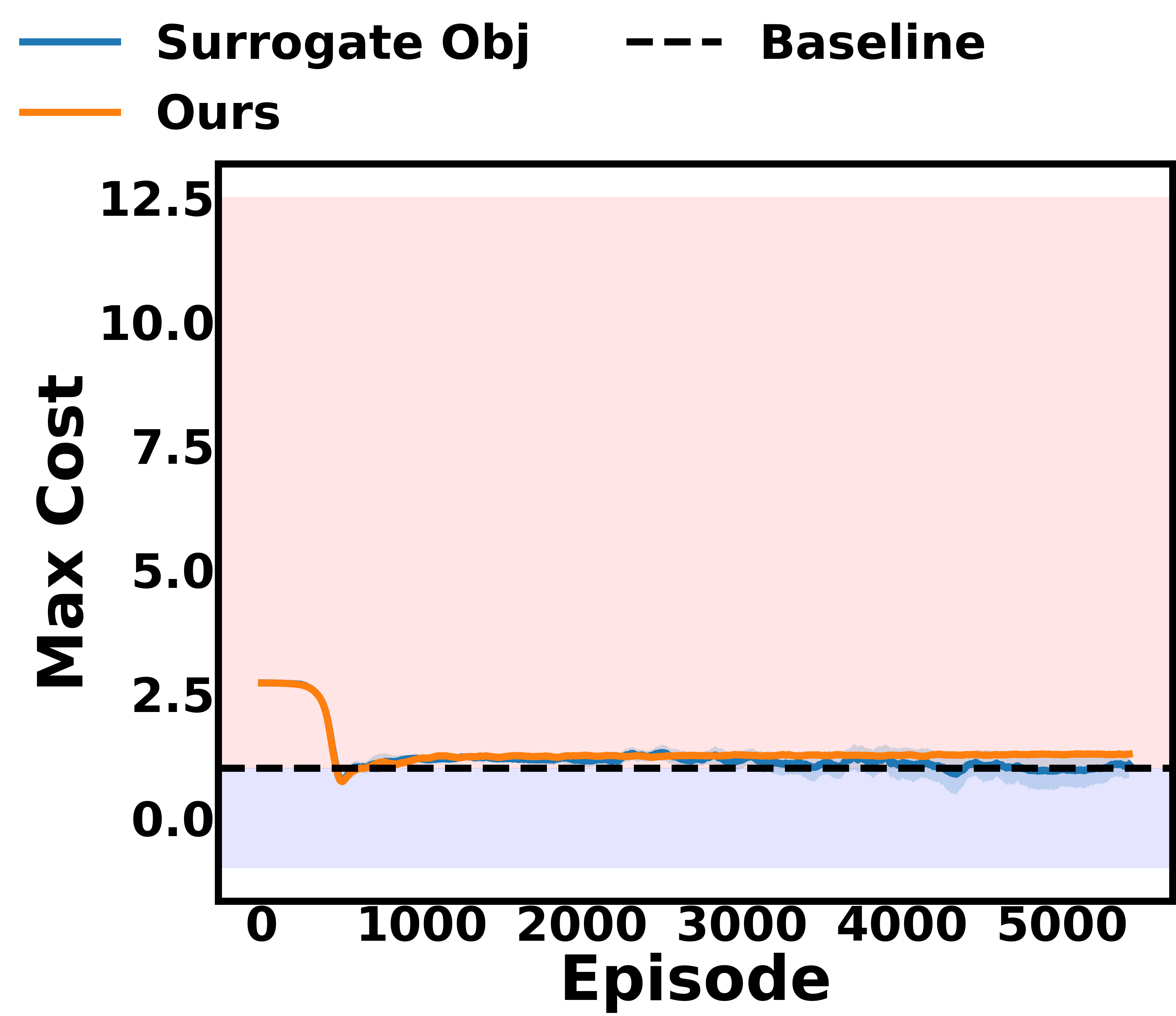}}
    \subcaptionbox{Swimmer: Training Max Cost\label{fig:swimmer_train_cost}}[0.24\textwidth]{%
        \includegraphics[width=\linewidth]{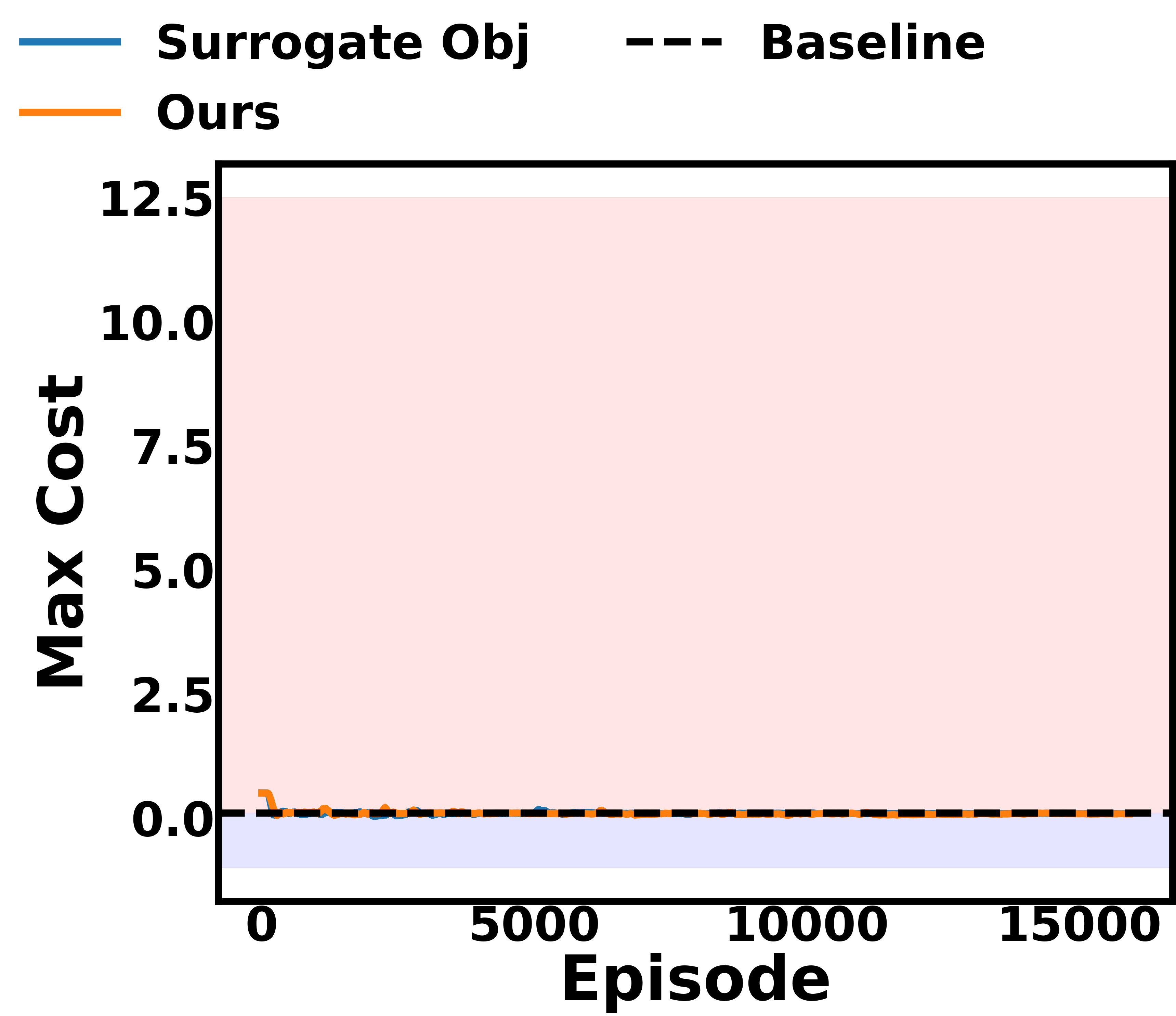}}

    \vspace{0.5em}

    \subcaptionbox{Ant: Inference Returns\label{fig:ant_infer_returns}}[0.24\textwidth]{%
        \includegraphics[width=\linewidth]{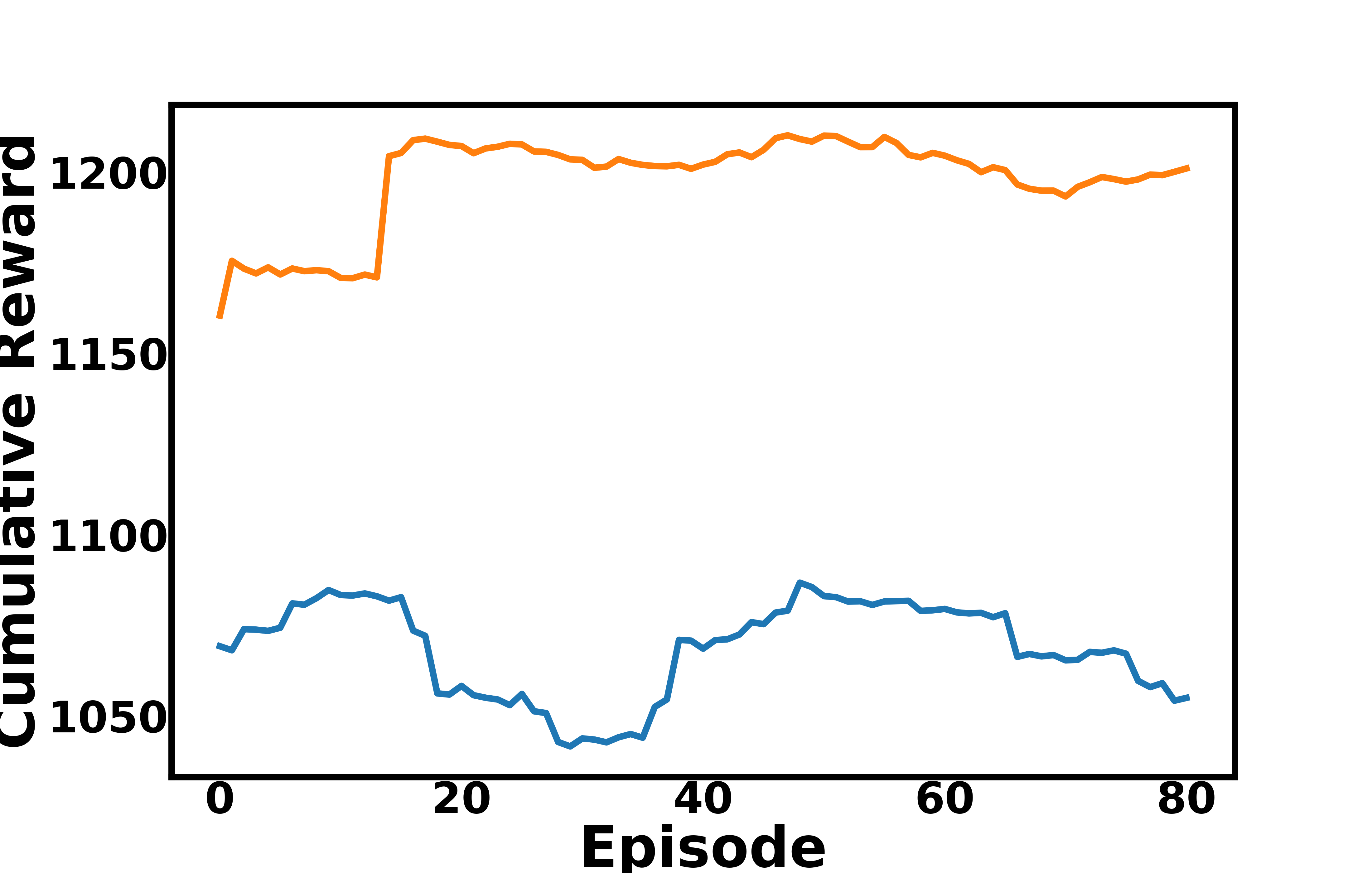}}
    \subcaptionbox{HalfCheetah: Inference Returns\label{fig:halfcheetah_infer_returns}}[0.24\textwidth]{%
        \includegraphics[width=\linewidth]{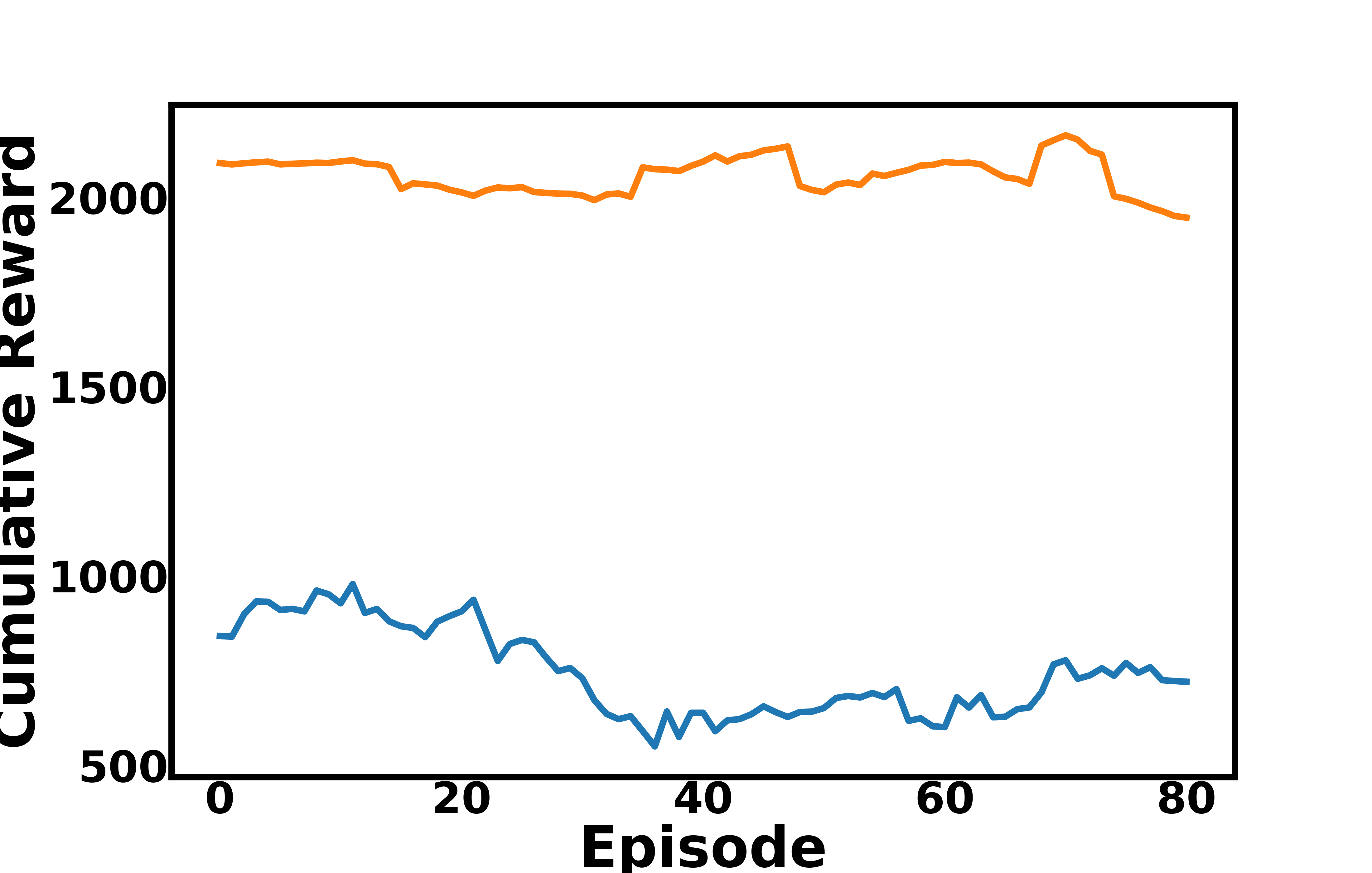}}
    \subcaptionbox{Humanoid: Inference Returns\label{fig:humanoid_infer_returns}}[0.24\textwidth]{%
        \includegraphics[width=\linewidth]{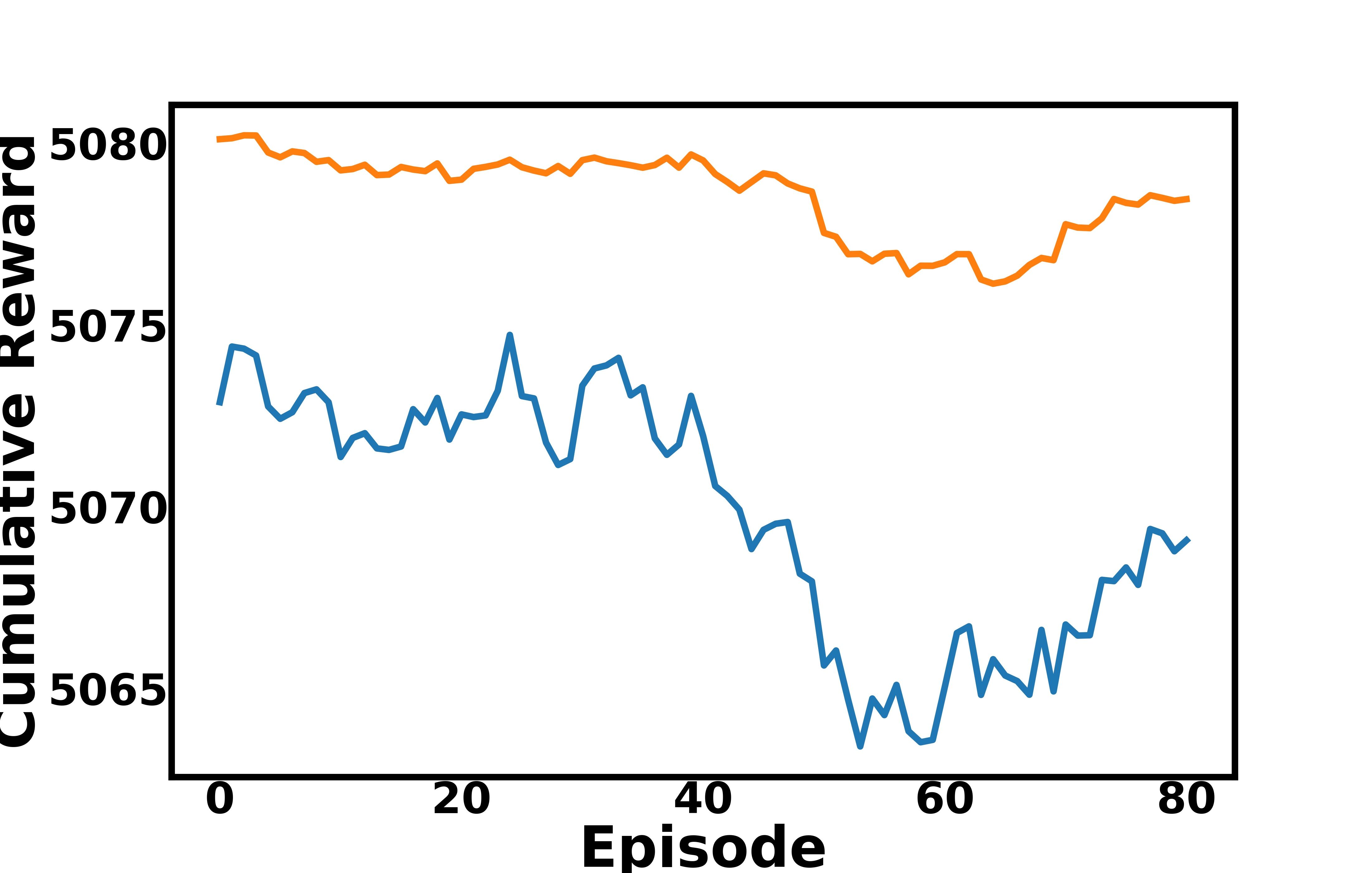}}
    \subcaptionbox{Swimmer: Inference Returns\label{fig:swimmer_infer_returns}}[0.24\textwidth]{%
        \includegraphics[width=\linewidth]{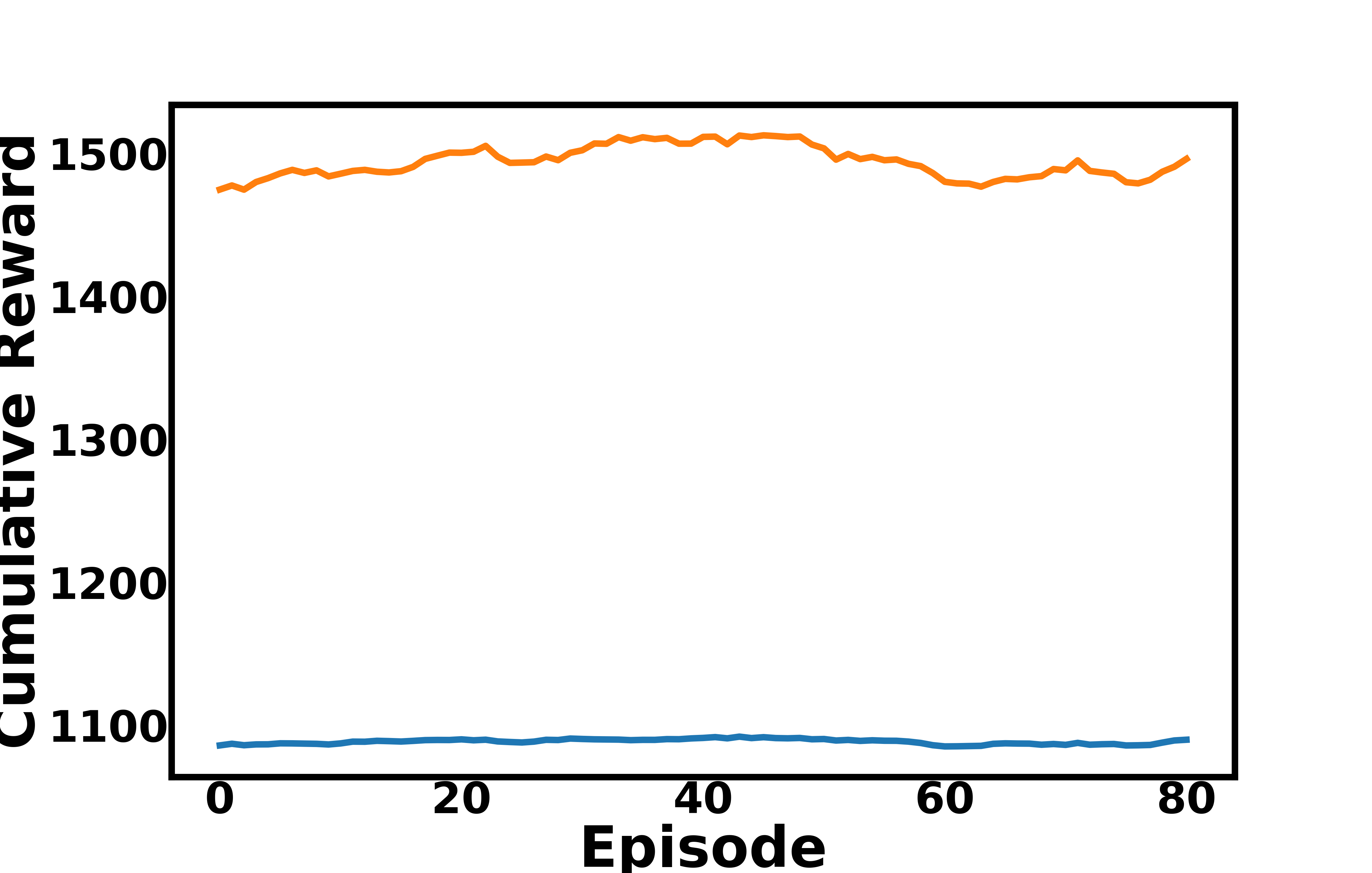}}

    \vspace{0.5em}

    \subcaptionbox{Ant: Inference Max Cost\label{fig:ant_infer_cost}}[0.24\textwidth]{%
        \includegraphics[width=\linewidth]{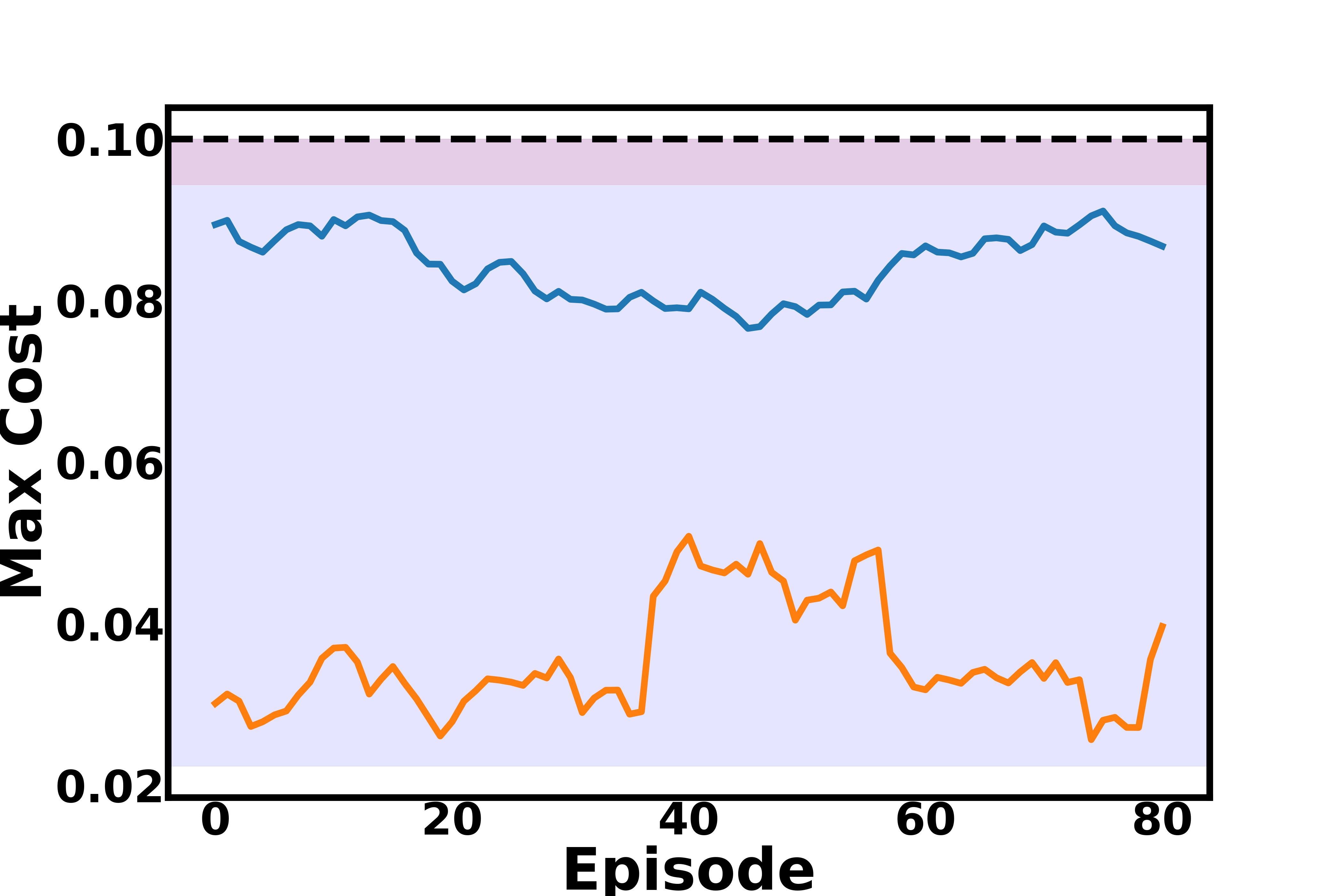}}
    \subcaptionbox{HalfCheetah: Inference Max Cost\label{fig:halfcheetah_infer_cost}}[0.24\textwidth]{%
        \includegraphics[width=\linewidth]{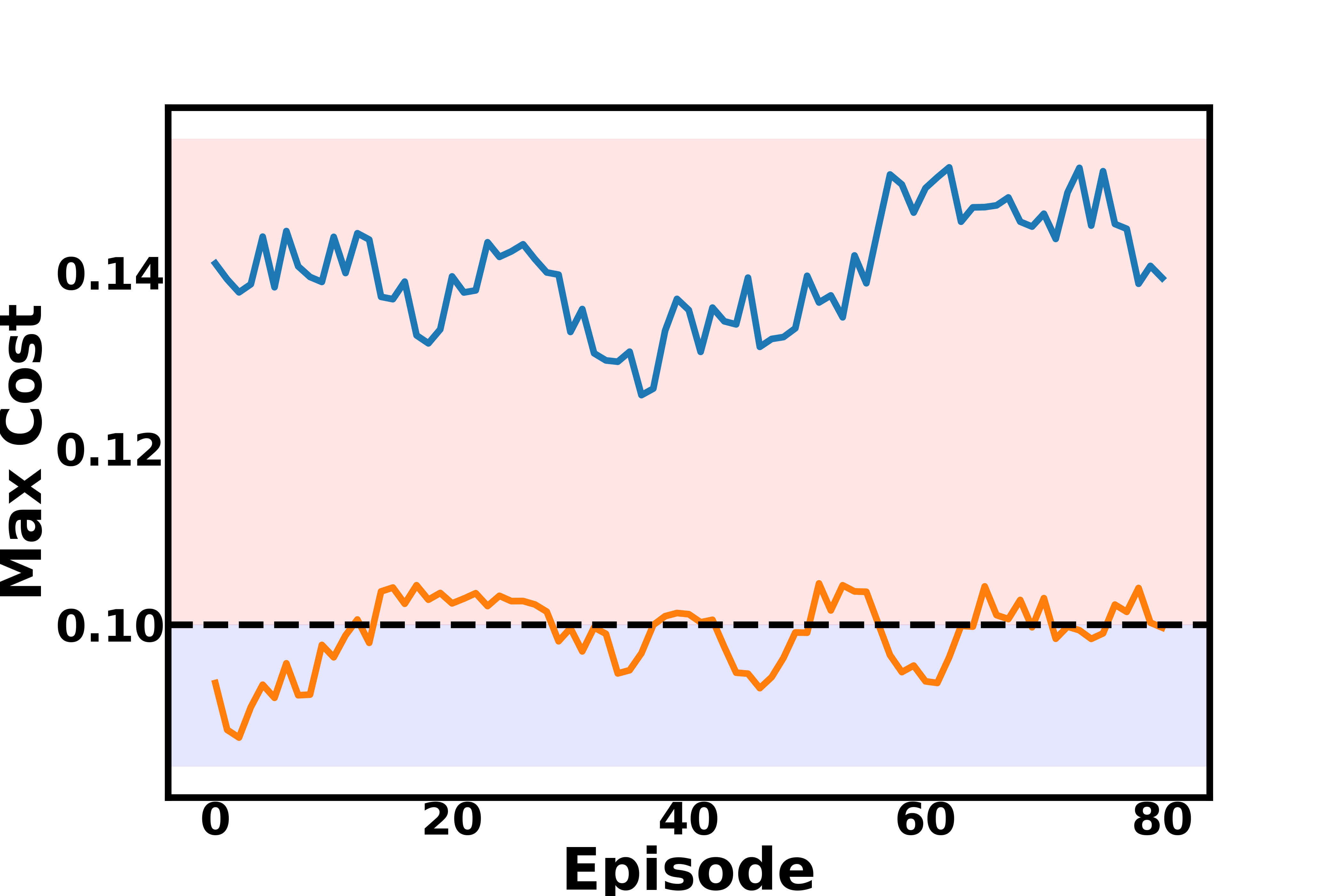}}
    \subcaptionbox{Humanoid: Inference Max Cost\label{fig:humanoid_infer_cost}}[0.24\textwidth]{%
        \includegraphics[width=\linewidth]{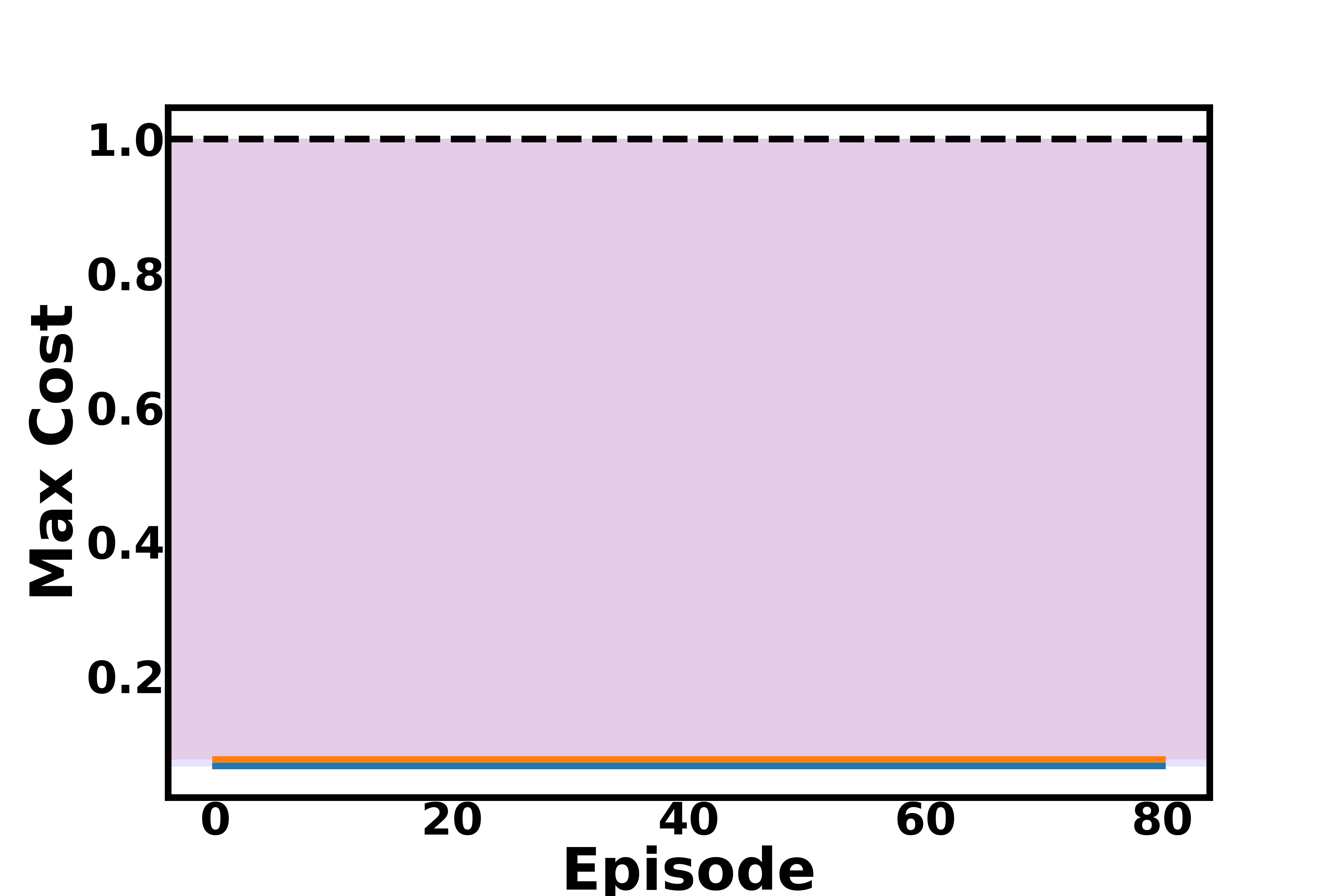}}
    \subcaptionbox{Swimmer: Inference Max Cost\label{fig:swimmer_infer_cost}}[0.24\textwidth]{%
        \includegraphics[width=\linewidth]{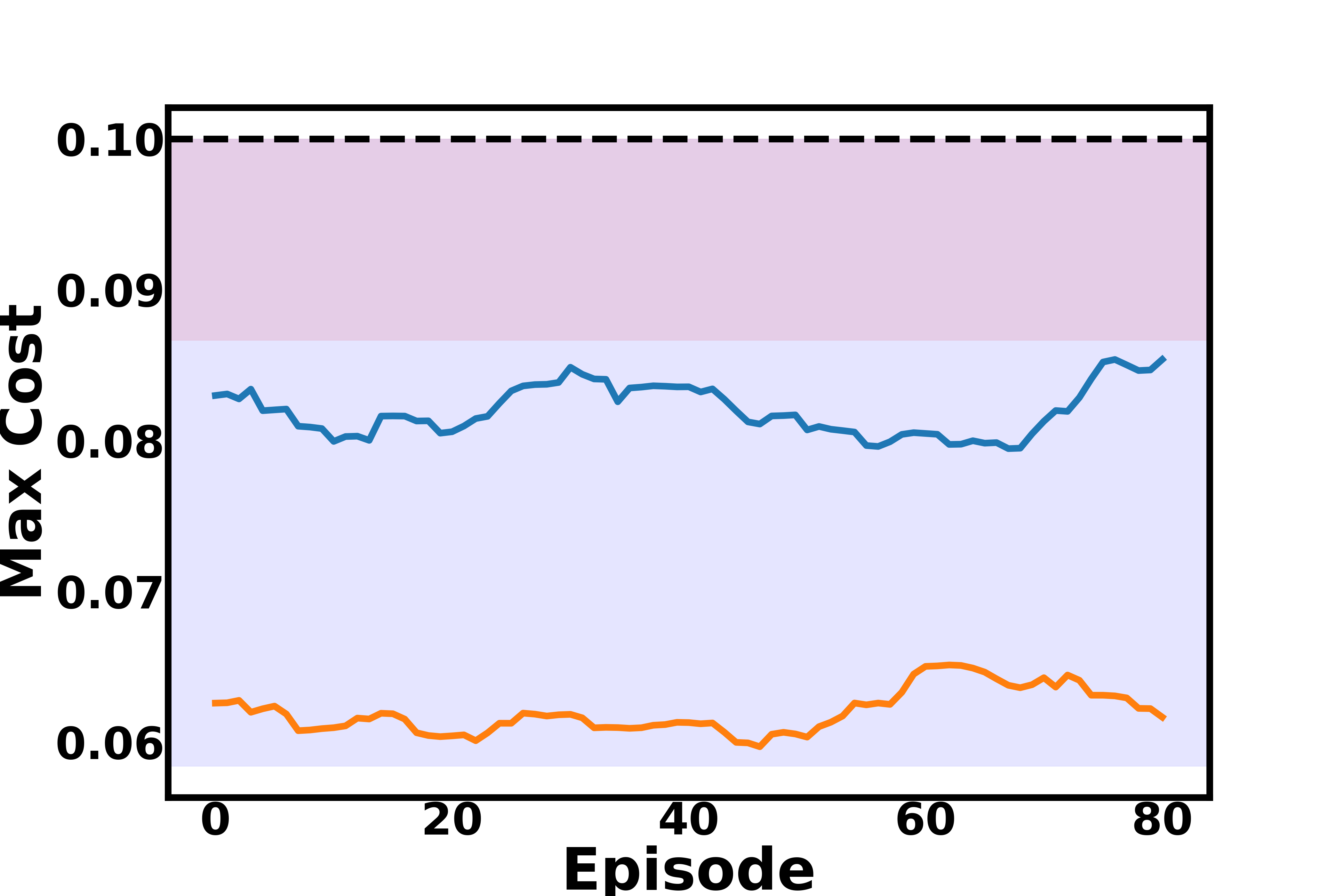}}

    \vspace{0.5em}

    \includegraphics[width=0.75\textwidth]{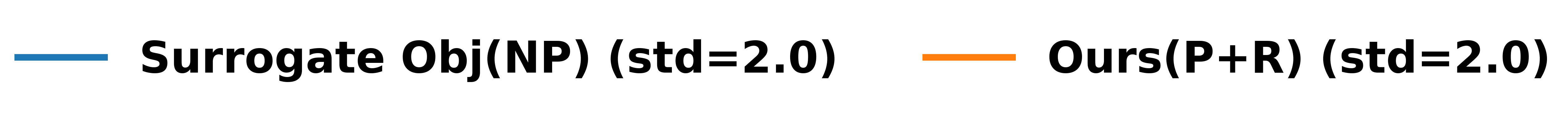}

    \caption{Training and inference performance across four MuJoCo environments. 
    Rows show training returns, training maximum costs, inference returns, and inference maximum costs, respectively. 
    We observe that \ours\ has better performance at perturbed inference, maintains persistent safety at perturbed inference, and the performance learning curve has lower standard deviation for \ours.}
    \label{fig:mujoco_experiments}
\end{figure*}

We further evaluate \ours\ on four MuJoCo environments: Ant, HalfCheetah, Humanoid, and Swimmer. 
For Ant, HalfCheetah, and Swimmer, the cost is defined using the maximum absolute action torque. The agent incurs zero cost when the maximum absolute action torque is below the threshold $0.5$; otherwise, the cost is the amount by which this maximum torque exceeds $0.5$. For Humanoid, the cost is defined as the energy consumption, computed as the sum of squared actions. During training, Ant and Humanoid are trained with gravity perturbations up to $0.7$, HalfCheetah with gravity perturbations up to $2.0$, and Swimmer with viscosity perturbations up to $0.7$.

Fig.~\ref{fig:mujoco_experiments} compares the training behavior of \ours\ with the surrogate objective trained without robustness. During training, \ours\ achieves smoother learning curves with lower variance while maintaining the peak-cost constraint more consistently across environments. This indicates that training under perturbed dynamics improves stability without sacrificing constraint satisfaction.

For inference, we evaluate the learned policies under stronger perturbations, with gravity perturbed up to $2.0$ (Ant, HalfCheetah, Humanoid) and viscosity perturbed up to $2.0$ (Swimmer). The inference results show that \ours\ achieves better performance under perturbed dynamics and maintains persistent safety compared to the non-robust surrogate objective. These results demonstrate that robustness-aware training improves both return and safety generalization under environment uncertainty.

\section{Conclusion and Future Work}
In this work, we address the challenges of peak-cost constrained MDPs, demonstrating that they may not exhibit zero duality gap, unlike standard CMDPs. We propose a robust surrogate approach that achieves near-optimal solutions while ensuring constraint satisfaction within an acceptable margin $b$. Through extensive experiments on a perturbed CartPole and Mujoco environments, we show that \ours\ outperforms existing methods in terms of both constraint satisfaction and reward optimization, even under uncertain dynamics. 

The characterization of the convergence guarantees of the proposed approach has been left for the future. Extending this framework to more complex real-world systems constitutes an important future research direction.

\section*{Acknowledgement}
This work is partially supported by NSF ECCS Award \#2534262.

\bibliographystyle{IEEEtran}
\bibliography{reference}
\appendix

\subsection{Proof of Theorem~\ref{thm:non-zero-duality}and Proposition~\ref{prop:equivalence}}\label{sec:proof}
\textit{Formal Statement of Theorem~\ref{thm:non-zero-duality}}:
Consider a discounted MDP with discount factor $\gamma=0.9$, one nonterminal state $s$, and one absorbing terminal state $\bot$. At state $s$, there are two actions:

\begin{itemize}
    \item action $a_0$: reward $r(s,a_0)=4$, cost $c(s,a_0)=0$, and transition
    \[
    P(\bot\mid s,a_0)=1;
    \]
    \item action $a_1$: reward $r(s,a_1)=8$, cost $c(s,a_1)=\tfrac12$, and transition
    \[
    P(s\mid s,a_1)=1.
    \]
\end{itemize}

Let $\pi_p$ denote the stationary randomized policy that chooses $a_0$ with probability $p\in[0,1]$ and $a_1$ with probability $1-p$. Define the hard-max modified constraint value
\begin{align}
V_c^{\pi_p}(s)
=
\sum_{a}\pi_p(a\mid s)\Bigl((1-\gamma)c(s,a) \nonumber\\ + \gamma \max\{c(s,a), \mathbb{E}[V_c^{\pi_p}(s')\mid s,a]\}\Bigr),
\end{align}
with $V_c^{\pi_p}(\bot)=0$.

Consider the constrained optimization problem
\[
P^\star
:=
\max_{p\in[0,1]} J_r(\pi_p)
\qquad\text{s.t.}\qquad
V_c^{\pi_p}(s)\le b,
\]
with threshold $b=0.4$, where $J_r(\pi_p)$ is the standard discounted reward value starting from $s$. Here, note that peak cost-constraint across the trajectory is the same as here because of the two states with one state being terminal.

Then the associated Lagrangian dual
\[
D^\star
:=
\inf_{\nu\ge 0}\ \sup_{p\in[0,1]}
\Bigl\{
J_r(\pi_p)-\nu\bigl(V_c^{\pi_p}(s)-b\bigr)
\Bigr\}
\]
satisfies
\[
D^\star>P^\star.
\]
More precisely,
\[
P^\star=\tfrac{180}{7}\approx 25.7143,
\qquad
D^\star=64.8,
\]
so that
\[
D^\star-P^\star=\tfrac{1368}{35}\approx 39.0857>0.
\]
Hence the peak-cost-constraint problem admits a strict duality gap.

\begin{proof}
Let $J_r(\pi_p)$ denote the discounted reward value starting from $s$. By the Bellman equation,
\[
J_r(\pi_p)
=
p\cdot 4 + (1-p)\bigl(8+\gamma J_r(\pi_p)\bigr).
\]
Since $\gamma=0.9$, this becomes
\[
J_r(\pi_p)
=
4p+(1-p)(8+0.9J_r(\pi_p)).
\]
Rearranging gives
\[
J_r(\pi_p)\bigl(1-0.9(1-p)\bigr)=8-4p,
\]
hence
\[
J_r(\pi_p)=\tfrac{8-4p}{0.1+0.9p}.
\]

Next, let $V(p):=V_c^{\pi_p}(s)$. By definition of the modified hard-max constraint,
\begin{align}
V(p)
&=
p\Bigl((1-\gamma)\cdot 0+\gamma\max\{0,0\}\Bigr)
+ \notag\\
& \qquad  (1-p)\Bigl((1-\gamma)\tfrac12   +\gamma\max\{\tfrac12,V(p)\}\Bigr).
\end{align}
The first term is zero, so
\[
V(p)
=
(1-p)\Bigl(0.1\cdot\tfrac12+0.9\max\{\tfrac12,V(p)\}\Bigr).
\]

We now solve this fixed-point equation. Suppose first that $V(p)\le \tfrac12$. Then
\[
\max\{\tfrac12,V(p)\}=\tfrac12,
\]
and thus
\[
V(p)
=
(1-p)\Bigl(0.05+0.9\cdot\tfrac12\Bigr)
=
(1-p)\cdot\tfrac12
=
\tfrac{1-p}{2}.
\]
Since $\tfrac{1-p}{2}\le \tfrac12$ for all $p\in[0,1]$, this is self-consistent. Therefore,
\[
V_c^{\pi_p}(s)=\tfrac{1-p}{2},\qquad \forall p\in[0,1].
\]

We now solve the primal problem
\[
\max_{p\in[0,1]} J_r(\pi_p)
\qquad\text{s.t.}\qquad
\tfrac{1-p}{2}\le 0.4.
\]
The constraint is equivalent to
\[
1-p\le 0.8
\iff
p\ge 0.2.
\]
Moreover,
\[
J_r(\pi_p)=\tfrac{8-4p}{0.1+0.9p},
\]
and differentiating yields
\begin{align}
\tfrac{d}{dp}J_r(\pi_p)
=
\tfrac{-4(0.1+0.9p)-(8-4p)(0.9)}{(0.1+0.9p)^2}
\nonumber \\ =
\tfrac{-7.6}{(0.1+0.9p)^2}<0.
\end{align}
Hence $J_r(\pi_p)$ is strictly decreasing in $p$, so the primal optimum is attained at the smallest feasible $p$, namely $p^\star=0.2$. Therefore,
\[
P^\star
=
J_r(\pi_{0.2})
=
\tfrac{8-4(0.2)}{0.1+0.9(0.2)}
=
\tfrac{7.2}{0.28}
=
\tfrac{180}{7}.
\]

Now consider the Lagrangian
\[
L(p,\nu)
=
J_r(\pi_p)-\nu\bigl(V_c^{\pi_p}(s)-0.4\bigr),
\qquad \nu\ge 0.
\]
Substituting the formulas for $J_r(\pi_p)$ and $V_c^{\pi_p}(s)$ gives
\begin{align}
L(p,\nu)
=
\tfrac{8-4p}{0.1+0.9p}
-
\nu\left(\tfrac{1-p}{2}-0.4\right)
\nonumber \\ =
\tfrac{8-4p}{0.1+0.9p}
-\nu(0.1-0.5p).
\end{align}

Fix $\nu\ge 0$. The derivative with respect to $p$ is
\[
\tfrac{\partial}{\partial p}L(p,\nu)
=
-\tfrac{7.6}{(0.1+0.9p)^2}
+\tfrac{\nu}{2},
\]
and the second derivative is
\[
\tfrac{\partial^2}{\partial p^2}L(p,\nu)
=
\tfrac{13.68}{(0.1+0.9p)^3}>0.
\]
Thus for every fixed $\nu$, the function $p\mapsto L(p,\nu)$ is convex on $[0,1]$. Therefore its supremum over $[0,1]$ is attained at one of the endpoints:
\[
g(\nu)
:=
\sup_{p\in[0,1]}L(p,\nu)
=
\max\{L(0,\nu),L(1,\nu)\}.
\]

Evaluating the endpoints,
\[
L(0,\nu)=\tfrac{8}{0.1}-0.1\nu=80-0.1\nu,
\]
and
\[
L(1,\nu)=\tfrac{4}{1}+0.4\nu=4+0.4\nu.
\]
Hence
\[
g(\nu)=\max\{80-0.1\nu,\;4+0.4\nu\}.
\]

The minimum of this maximum of two affine functions occurs when they are equal:
\[
80-0.1\nu=4+0.4\nu
\iff
76=0.5\nu
\iff
\nu^\star=152.
\]
Substituting back,
\[
D^\star=g(\nu^\star)=80-0.1(152)=64.8.
\]

Finally,
\begin{align*}
D^\star-P^\star
=
64.8-\tfrac{180}{7}
=
\tfrac{324}{5}-\tfrac{180}{7}
& =
\tfrac{2268-900}{35} \\
&=
\tfrac{1368}{35}
>0.
\end{align*}
Thus the duality gap is strict.
\end{proof}
\textbf{Proof of Proposition~\ref{prop:equivalence}}:

\begin{proof}
\textbf{Part (i): Reward Optimality.}
Since $\pi^*$ is feasible for problem~\eqref{eq:robust_rcrl}, we have:
    $\max_{P \in \mathcal{P},\ s \in \mathcal{S}^{\pi^*}} 
    V_{c,\mathrm{peak}}^{\pi^*,P}(s) \leq b,$
which implies:
 the surrogate objective evaluated at $\pi^*$ satisfies:
\begin{align}
   & \max\left\{\frac{\max_{P \in \mathcal{P}} J_r^{\pi^*,P}}{\beta},\ 
    \max_{P \in \mathcal{P},\ s \in \mathcal{S}^{\pi^*}} 
    V_{c,\mathrm{peak}}^{\pi^*,P}(s) - b\right\} \nonumber\\
   & = \frac{\max_{P \in \mathcal{P}} J_r^{\pi^*,P}}{\beta},
\end{align}
since the second term inside the max is non-positive. Since $\hat{\pi}^*$ is the 
minimizer of the surrogate problem~\eqref{eq:main_obj}, we have:
\begin{align}
    & \max\left\{\frac{\max_{P \in \mathcal{P}} J_r^{\hat{\pi}^*,P}}{\beta},\ 
    \max_{P \in \mathcal{P},\ s \in \mathcal{S}^{\hat{\pi}^*}} 
    V_{c,\mathrm{peak}}^{\hat{\pi}^*,P}(s) - b\right\} 
    \nonumber\\& \geq \frac{\max_{P \in \mathcal{P}} J_r^{\pi^*,P}}{\beta}.
\end{align}
Since the left-hand side is greater than or equal to 
$\frac{\max_{P \in \mathcal{P}} J_r^{\hat{\pi}^*,P}}{\beta}$, the result follows for the minimization objective.

\textbf{Part (ii): Constraint Violation Bound.}
From the optimality of $\hat{\pi}^*$ and the bound established in Part (i):
\begin{align}
    & \max\left\{\frac{\max_{P \in \mathcal{P}} J_r^{\hat{\pi}^*,P}}{\beta},\ 
    \max_{P \in \mathcal{P},\ s \in \mathcal{S}^{\hat{\pi}^*}} 
    V_{c,\mathrm{peak}}^{\hat{\pi}^*,P}(s) - b\right\} \nonumber\\
    & \geq \frac{\max_{P \in \mathcal{P}} J_r^{\pi^*,P}}{\beta}.
\end{align}
In particular, the second term inside the max satisfies:
\begin{equation}
    \max_{P \in \mathcal{P},\ s \in \mathcal{S}^{\hat{\pi}^*}} 
    V_{c,\mathrm{peak}}^{\hat{\pi}^*,P}(s) - b 
    \geq \frac{\max_{P \in \mathcal{P}} J_r^{\pi^*,P}}{\beta}.
\end{equation}
Let us consider the cost values $c_i \in [0,C_\text{max}]$.
Since $V_{c,\mathrm{peak}}^{\pi,P}(s) \in [0, C_\text{max}]$ for all $\pi, P, s$ by definition 
of the peak-cost value function, and $J_r^{\pi^*,P} \leq C_{max}$, we obtain:
\begin{equation}
    \max_{P \in \mathcal{P},\ s \in \mathcal{S}^{\hat{\pi}^*}} 
    V_{c,\mathrm{peak}}^{\hat{\pi}^*,P}(s) - b \leq \frac{C_\text{max}}{\beta}.
\end{equation}
Setting $\beta \geq C_\text{max}/\epsilon$ completes the proof of part (ii).
\end{proof}

\end{document}